\documentclass{article}

\usepackage{microtype}
\usepackage{graphicx}
\usepackage{subcaption}
\usepackage{booktabs} % for professional tables

\usepackage{booktabs}   
\usepackage{multirow}   % 用于合并行
\usepackage[table]{xcolor} % 用于表

\usepackage{hyperref}

\usepackage[ruled,linesnumbered]{algorithm2e}

\usepackage[preprint]{icml2026}

\usepackage{amsmath}
\usepackage{amssymb}
\usepackage{mathtools}
\usepackage{amsthm}

\usepackage[capitalize,noabbrev]{cleveref}

\theoremstyle{plain}
\newtheorem{theorem}{Theorem}[section]
\newtheorem{proposition}[theorem]{Proposition}

\theoremstyle{definition}
\newtheorem{definition}[theorem]{Definition}
\newtheorem{assumption}[theorem]{Assumption}
\theoremstyle{remark}

\usepackage[textsize=tiny]{todonotes}

\icmltitlerunning{Optimizing Few-Step Generation with Adaptive Matching Distillation}

\begin{document}

\twocolumn[
  %\icmltitle{Submission and Formatting Instructions for International Conference on Machine Learning (ICML 2026)}
  %\icmltitle{Improving Few-Step Generation with Distribution Reinforced Distillation}
  % \icmltitle{Reinforcing Few-Step Generation with Reinforced Distribution Matching Distillation}
  %\icmltitle{Trust Your Teacher Only When You Are Ready: \\Dynamic Score Modulation in Distribution Matching Distillation}
  \icmltitle{Optimizing Few-Step Generation with Adaptive Matching Distillation}
  %\icmltitle{AMD: Understanding and Improving Matching Distillation}

  % adaptive-matching-distillation

  % It is OKAY to include author information, even for blind submissions: the
  % style file will automatically remove it for you unless you've provided
  % the [accepted] option to the icml2026 package.

  % List of affiliations: The first argument should be a (short) identifier you
  % will use later to specify author affiliations Academic affiliations
  % should list Department, University, City, Region, Country Industry
  % affiliations should list Company, City, Region, Country

  % You can specify symbols, otherwise they are numbered in order. Ideally, you
  % should not use this facility. Affiliations will be numbered in order of
  % appearance and this is the preferred way.
  \icmlsetsymbol{equal}{*}

  \begin{icmlauthorlist}
    \icmlauthor{Lichen Bai}{equal,yyy}
    \icmlauthor{Zikai Zhou}{equal,yyy}
    \icmlauthor{Shitong Shao}{yyy}
    \icmlauthor{Wenliang Zhong}{yyy} 
    \icmlauthor{Shuo Yang}{comp} \\
    \icmlauthor{Shuo Chen}{sch} 
    \icmlauthor{Bojun Chen}{yyy}
    \icmlauthor{Zeke Xie}{yyy}
    %\icmlauthor{}{sch}
    %\icmlauthor{}{sch}
  \end{icmlauthorlist}

  \icmlaffiliation{yyy}{xLeaF Lab, The Hong Kong University of Science and Technology (Guangzhou)}
  \icmlaffiliation{comp}{Harbin Institute of Technology, Shenzhen}
  \icmlaffiliation{sch}{School of Intelligence Science and Technology, Nanjing Univer
sity, China.}
  \icmlcorrespondingauthor{Zeke Xie}{zekexie@hkust-gz.edu.cn}
  % 加这一行即可
  % You may provide any keywords that you find helpful for describing your
  % paper; these are used to populate the "keywords" metadata in the PDF but
  % will not be shown in the document
  % 
  \icmlkeywords{Machine Learning, ICML}
  \vskip 0.3in
]

% this must go after the closing bracket ] following \twocolumn[ ...

% This command actually creates the footnote in the first column listing the
% affiliations and the copyright notice. The command takes one argument, which
% is text to display at the start of the footnote. The \icmlEqualContribution
% command is standard text for equal contribution. Remove it (just {}) if you
% do not need this facility.

% Use ONE of the following lines. DO NOT remove the command.
% If you have no special notice, KEEP empty braces:
% \printAffiliationsAndNotice{}  % no special notice (required even if empty)
% Or, if applicable, use the standard equal contribution text:
\printAffiliationsAndNotice{\icmlEqualContribution}
 
\begin{abstract}
  Distribution Matching Distillation (DMD) is a powerful acceleration paradigm, yet its stability is often compromised in \emph{Forbidden Zones}—regions where the real teacher provides unreliable guidance while the fake teacher exerts insufficient repulsive force. In this work, we propose a unified optimization framework that reinterprets prior art as implicit strategies to avoid these corrupted regions. Based on this insight, we introduce \textbf{Adaptive Matching Distillation (AMD)}, a self-correcting mechanism that utilizes reward proxies to explicitly detect and escape \emph{Forbidden Zones}. AMD dynamically prioritizes corrective gradients via structural signal decomposition and introduces \emph{Repulsive Landscape Sharpening} to enforce steep energy barriers against failure mode collapse. Extensive experiments across image and video generation tasks (e.g., SDXL, Wan2.1) and rigorous benchmarks (e.g., VBench, GenEval) demonstrate that AMD significantly enhances sample fidelity and training robustness. For instance, AMD improves the HPSv2 score on SDXL from \textbf{30.64} to \textbf{31.25}, outperforming state-of-the-art baselines. These findings validate that explicitly rectifying optimization trajectories within \emph{Forbidden Zones} is essential for pushing the performance ceiling of few-step generative models.
\end{abstract}

\section{Introduction}

Diffusion models~\citep{song2020score, karras2022elucidating} have demonstrated remarkable success in visual generation, exhibiting remarkable capabilities in generating diverse, high-resolution images~\citep{dhariwal2021diffusion,podell2023sdxlimprovinglatentdiffusion} and videos~\citep{wan2025wan, wu2025hunyuanvideo}. Despite these advancements, their utility is often hindered by the substantial latency of the iterative denoising process, which demands dozens or even hundreds of Number of Function Evaluations~(NFEs) during inference. 

To alleviate this latency, a range of distillation methods~\citep{yin2024improved,salimans2022progressive,sauer2024adversarial,liu2023instaflow} have been proposed to compress the multi-step denoising trajectory into a minimal number of generation steps. Among these, Distribution Matching Distillation (DMD)~\citep{yin2024improved} has emerged as a prominent approach. DMD leverages a dual-teacher framework: a real teacher (typically a pre-trained diffusion model) provides gradients to guide the student generated samples toward the target data distribution, while a fake teacher (typically a diffusion model trained alongside the student) approximates the student model’s current distribution. Crucially, the fake teacher supplies a repulsive signal to regularize training, encouraging sample diversity and mitigating mode collapse.

\begin{figure}[t]
    \centering
    \includegraphics[width=1\linewidth]{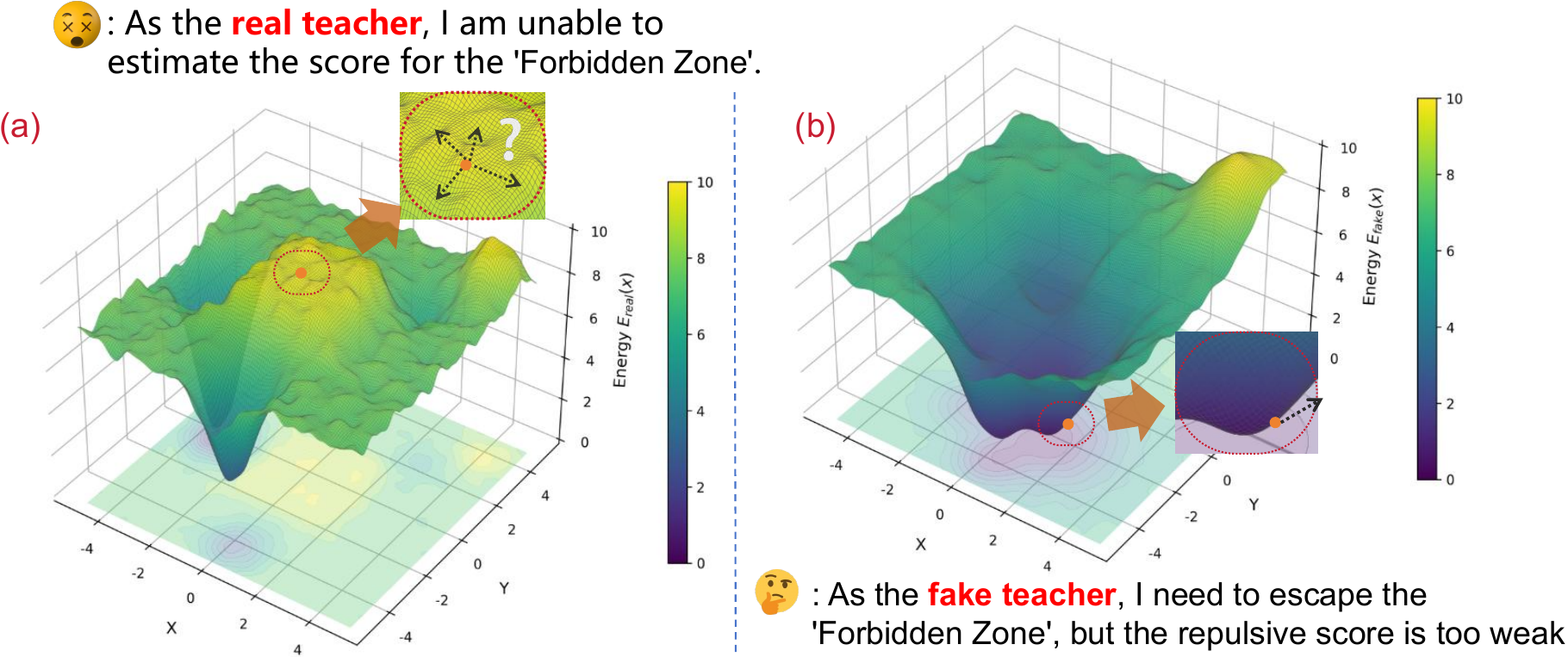}
    \caption{Visualization of the Optimization Landscape in \emph{Forbidden Zones}. \textbf{(a)} The Real Teacher's energy potential becomes undefined or poorly calibrated far from the data manifold, exerting misleading attractive forces. \textbf{(b)} The Fake Teacher's landscape exhibits a shallow slope, resulting in a weak repulsive force that is insufficient to propel the student out of the \emph{Forbidden Zone}.}
    \label{fig:standard_dmd_disadvantage}
    \vspace{-0.25cm}
\end{figure}

Despite its conceptual appeal, the stability of DMD implicitly hinges on a fragile assumption that \textbf{the real teacher offers reliable guidance, while the fake teacher provides adequate repulsive signals to guide or push the student samples toward the target distribution}~\citep{shao2025magicdistillation,jiang2025distribution}. However, this assumption is frequently compromised in practice. During distillation process, the student model inevitably generates severely distorted or low-quality samples that deviate substantially from the real data manifold. For these poor samples, the real teacher's score estimates are often poorly calibrated due to the distribution shift, and the repulsive guidance from the fake teacher proves insufficient to extricate the student from these low-quality regions, which we identify as \emph{Forbidden Zones}. In these zones, as shown in~\cref{fig:standard_dmd_disadvantage}, training dynamics may enter a self-reinforcing degradation regime, where the student repeatedly revisits low-quality regions and struggles to recover. Consequently, the lack of effective correction impedes the whole optimization process, hindering convergence and perpetuating generative distortions.

% This issue is further exacerbated when the repulsive force induced by the fake teacher is insufficient to counterbalance the unreliable attraction from the real teacher. From this vantage point, a growing body of recent work can be re-interpreted as implicitly constraining the student to avoid such corrupted regions. However, these approaches do not explicitly identify when a sample has entered a \emph{Forbidden Zone}, nor do they provide a direct mechanism to adapt the distillation dynamics once this occurs.

Armed with our definition of \emph{Forbidden Zones}, we offer a novel perspective that unifies various existing DMD methods. We re-interpret these prior works~\citep{yin2024one,yin2024improved,jiang2025distribution,shao2025magicdistillation,liu2025decoupled} as methods that implicitly constrain the student to avoid such corrupted regions. Crucially, however, these methods do not explicitly detect the occurrence of \emph{Forbidden Zone}, nor do they provide a direct mechanism to adapt the distillation dynamics once the student inevitably enters one.

% From this vantage point, a growing body of recent work can be re-interpreted as implicitly constraining the student to avoid such corrupted regions. However, these approaches do not explicitly identify when a sample has entered a \emph{Forbidden Zone}, nor do they provide a direct mechanism to adapt the distillation dynamics once this occurs.

To resolve this, we contend that the most effective strategy is to pinpoint these \emph{Forbidden Zone} and execute a swift ``leapfrog'' back to the valid data manifold. Our framework is founded on three key observations: \textbf{(1) Detection via Low Rewards:} Severely degraded samples consistently yield low reward scores. Since these samples lie outside the support of the real teacher, rendering its score estimates unreliable, the reward model serves as a practical proxy for identifying \emph{Forbidden Zones}; \textbf{(2) Asymmetric Propulsion:} To facilitate an escape, the fake teacher should not learn the student distribution uniformly; instead, it must specialize in modeling these corrupted regions to provide targeted and potent repulsive forces; and \textbf{(3) Signal Prioritization:}  Within the \emph{Forbidden Zone}, the repulsive force must take precedence over the potentially misleading guidance from the real teacher to ensure effective correction.

Building on these insights, we propose \textbf{AMD}~(\textbf{A}daptive \textbf{M}atching \textbf{D}istillation). Unlike previous static distillation approaches, AMD acts as a self-correcting system: it utilizes reward-guided diagnostics to identify distorted samples and adaptively modulates the distillation dynamics, prioritizing specialized signals to actively ``push'' the student out of the \emph{Forbidden Zone}. The dynamic intervention effectively mitigates the risk of training collapse and significantly promotes higher generation fidelity. 

The contributions of this work are threefold:

First, we analyze the \emph{Forbidden Zone} in DMD and reinterpret recent advancements as implicit mechanisms that avoid these corrupted regions. This perspective highlights the fundamental needs to explicitly address \emph{Forbidden Zones} for stable and effective distillation.

Second, we propose AMD, a reward-aware distillation framework that transforms DMD into a self-correcting system. By leveraging reward models as diagnostic sensors, it adaptively adapts the distillation dynamics: the fake teacher provides targeted repulsive guidance, and real and fake signals are dynamically prioritized to enable rapid recovery from \emph{Forbidden Zones}.

Third, we demonstrate the robust effectiveness of AMD across large-scale image and video generation tasks (e.g., SDXL, Wan2.1). AMD significantly mitigates training collapse while achieving superior sample fidelity. By leveraging reward-aware guidance, AMD enables the student to transcend the performance ceiling of the original teacher, delivering state-of-the-art results on diverse human-preference benchmarks (e.g., VideoGen-Eval, HPSv2).

\section{Preliminaries}

In this section, we present preliminaries about Distribution Matching Distillation in diffusion models. Due to page limitations, a comprehensive discussion of related work is provided in~\cref{sec:related_work}.

\paragraph{Distribution Matching Distillation} Distribution Matching Distillation (DMD)~\citep{yin2024one} compresses a pre-trained multi-step diffusion model (the \emph{real teacher}) into a few-step generator (the \emph{student}) $G_\theta$ by minimizing the KL divergence between the target distribution $p_{\mathrm{real}}$ and the student-induced distribution $p_{\mathrm{fake}}$. 

Let $x = G_\theta(z)$ with $z \sim \mathcal{N}(\mathbf{0}, \mathbf{I})$, and let $t \sim \mathcal{U}[0,1]$ denote a randomly sampled diffusion time.
The DMD objective is defined as
\begin{equation}
\mathcal{L}_{\mathrm{DMD}}
=
-\mathbb{E}_{z,t}
\left[
\log p_{\mathrm{real}}(\mathcal{F}_{t})
-
\log p_{\mathrm{fake}}(\mathcal{F}_{t})
\right],
\label{eq:dmd_loss}
\end{equation}
where $\mathcal{F}_{t}(x)$ denotes the forward diffusion operator that injects noise at time $t$, defined as
$\mathcal{F}_{t}(x) = t\cdot x +  (1-t)\cdot \epsilon$ with $\epsilon \sim \mathcal{N}(\mathbf{0}, \mathbf{I})$. Specifically, taking the gradient of~\cref{eq:dmd_loss} with respect to the student parameters $\theta$ yields a compact score-matching form as 
\begin{equation}
\nabla_\theta \mathcal{L}_{\mathrm{DMD}}
=
-\mathbb{E}_{z,t}
\left[
\left(
s_{\mathrm{real}}(\mathcal{F}_t)
-
s_{\mathrm{fake}}(\mathcal{F}_t)
\right)
\frac{\partial G_\theta(z)}{\partial \theta}
\right].
\\
\label{eq:dmd_grad}
\end{equation}

This formulation manifests a pull-push dynamic centered on the student-generated sample $x=G_{\theta}(z)$: the real teacher exerts an \emph{attractive score} that guides the student toward the target distribution, whereas the fake teacher applies a \emph{repulsive score} that drives the student away from its own current distribution. This interaction is formally analyzed through an optimization lens in \cref{sec:3.1}.

\paragraph{Reward Model} We assume access to a pre-trained reward (or preference) model
$R: \mathcal{X} \rightarrow \mathbb{R}$,
which assigns a scalar score $R(x)$ to a generated sample $x \in \mathcal{X}$.
The reward model is fixed throughout training and treated as a black-box
scoring function.

\section{Method}

In this section, we first re-examine DMD through the lens of optimization, establishing a unified framework to analyze its training dynamics. Based on this formulation, we formally define \emph{Forbidden Zones} and demonstrate how existing methods can be categorized within our framework as implicit mitigation strategies. Finally, we introduce our proposed \textbf{AMD}, detailing the theoretical mechanism by which it facilitates a swift return to the valid data manifold.

\begin{figure*}[t]
    \centering
    \includegraphics[width=0.90\linewidth]{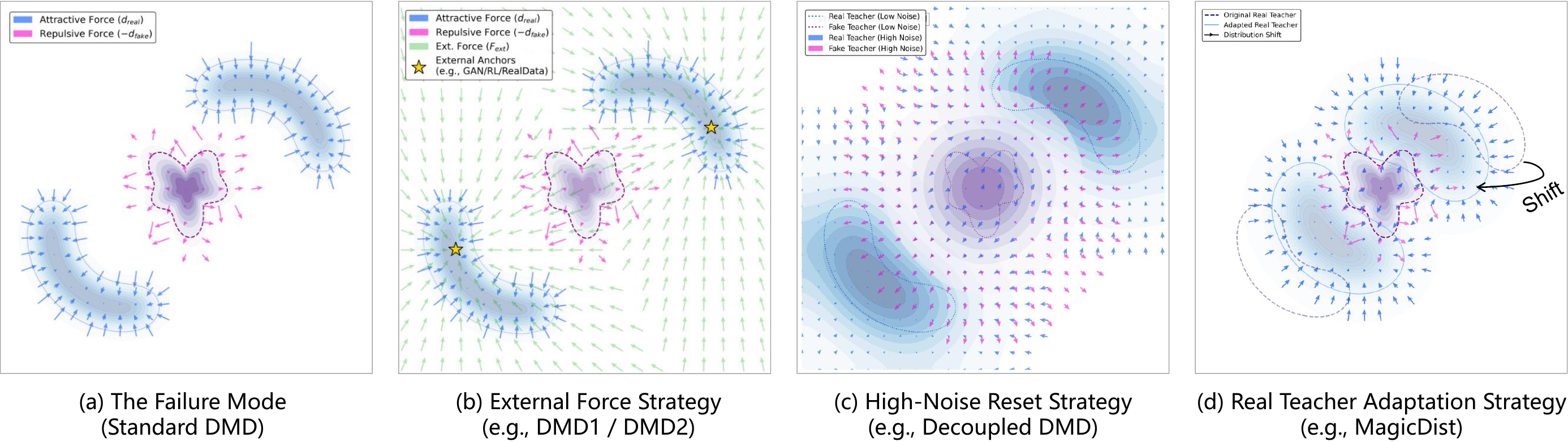}
    % \vspace{-0.05in}
    \caption{Visualizing the Taxonomy of Forbidden Zone Mitigation Strategies. \textbf{(a)} Standard Failure: Useful gradients exist only near modes, leaving a ``gradient vacuum'' in $\mathcal{Z}_{f}$. \textbf{(b)} External Force: An auxiliary force $\mathbf{F}_{\text{ext}}$ (green) provides global steering to bridge the gap. \textbf{(c)} Noise Reset: High noise induces distribution overlap, restoring gradient coverage. \textbf{(d)} Real Teacher Adaptation: The teacher manifold actively shifts towards the student to eliminate $\mathcal{Z}_{f}$.}
    \label{fig:unified_dmd}
    \vspace{-0.05in}
\end{figure*}

% --- The Taxonomy Table ---
\begin{table*}
\centering
\caption{Taxonomy of DMD-style methods from an optimization perspective. We decompose each method into a configuration of the generalized displacement gradient operator. $\tilde{a}$ denotes the reward-aware advantage, $\phi$ is the real teacher state, and $\psi$ represents the fake teacher optimization objective, where $\mathcal{L}_{\text{diff}} \coloneqq \|\epsilon - \epsilon_\psi(x_t, t)\|^2$.}
\label{tab:unified_dmd}
\small
\setlength{\tabcolsep}{4pt}
% \vspace{-0.05in}
\begin{tabular}{lcccccc}
\toprule
\textbf{Method} & \textbf{DMD} & \textbf{DMD2} & \textbf{D-DMD} & \textbf{Magic Dist.} & \textbf{DMDR} & \textbf{AMD (Ours)} \\
\midrule
$\mathcal{H}(\mathbf{d}_{\mathrm{real}},\mathbf{d}_{\mathrm{fake}})$ & $\mathbf{d}_{\mathrm{real}}-\mathbf{d}_{\mathrm{fake}}$ & $\mathbf{d}_{\mathrm{real}}-\mathbf{d}_{\mathrm{fake}}$ & $\mathbf{d}_{ca} + \mathbf{d}_{dm}$ & $\mathbf{d}_{\mathrm{real}}-\mathbf{d}_{\mathrm{fake}}$ & $\mathbf{d}_{\mathrm{real}}-\mathbf{d}_{\mathrm{fake}}$ & $\alpha\mathbf{d}_{ca} + 
\beta\mathbf{d}_{dm}$ \\

\textbf{Ext. Force} ($\mathbf{F}_{\text{ext}}$) & $x - x_{gt}$ & $\nabla_x \log D(x)$ & $0$ & $0$ & $\nabla_\theta \mathbb{E}[R(x)]$ & $\mathbf{0}$ \\

\textbf{Noise SNR} ($\mathcal{F}_t$) & $\mathcal{U}[0, 1]$ & $\mathcal{U}[0, 1]$ & $t_\phi > t_{G}$ & $\mathcal{U}[0, 1]$ & $\mathcal{U}[0, 1]$ & $\mathcal{U}[0, 1]$ \\

\textbf{Real Teacher} ($\phi$) & $\phi_{\text{fix}}$ & $\phi_{\text{fix}}$ & $\phi_{\text{fix}}$ & $\phi_{\theta} \to \phi_{\text{fix}}$ & $\phi_{\theta} \to \phi_{\text{fix}}$ & $\phi_{\text{fix}}$ \\

\textbf{Fake Teacher} ($\psi$) & $\mathcal{L}_{\text{diff}}$ & $\mathcal{L}_{\text{diff}}$ & $\mathcal{L}_{\text{diff}}$ & $\mathcal{L}_{\text{diff}}$ & $\mathcal{L}_{\text{diff}}$ & $\mathbf{W(\tilde{a})} \cdot \mathcal{L}_{\text{diff}}$ \\

\midrule
\textbf{$\mathcal{Z}_f$ Strategy} & Static Tethering & Adv. Anchoring & SNR Reset & Manifold Shrink & RL Steering & RL Steering \\
\bottomrule
\end{tabular}
\vspace{-0.05in}
\end{table*}

\subsection{An Optimization Perspective on DMD}
\label{sec:3.1}
While DMD is conventionally formulated as a distribution matching problem via score estimation, which often obscure the mechanistic causes of training failure. To gain a more actionable understanding, we reformulate the student's parameter-space update as an effective optimization step performed directly on the generated samples. This shift is essential for two reasons: it provides a clear \textbf{push-pull} physical intuition of the distillation dynamics, and a mathematical tool for diagnosing training collapse.

\begin{proposition}[Sample-space Gradient Descent Reformulation]~(See Appendix~\ref{sec:proof})
\label{prop:dmd_gd}
Let $x = G_\theta(z)$ be a latent sample generated by the student. Under the first-order approximation, the parameter update $\theta \leftarrow \theta - \eta \nabla_\theta \mathcal{L}_{\mathrm{DMD}}$ induces an effective gradient descent step on the sample $x$ in the latent space:
\begin{equation}
    x_{new} = x - \eta_{\text{eff}} \cdot \nabla_x \mathcal{V}_{\mathrm{DMD}}(x) ,
    \label{eq:dmd_potential_1}
\end{equation}
where $\mathcal{V}_{\mathrm{DMD}}(x)$ is the \textbf{distillation potential} defined as the contrastive energy between the teacher-guided targets as:
\begin{equation}
\begin{aligned}
    \mathcal{V}_{\mathrm{DMD}}(x) &= \frac{1}{2} \| x - \hat{x}_{0,\mathrm{real}} \|^2 - \frac{1}{2} \| x - \hat{x}_{0,\mathrm{fake}} \|^2 \\
    &= \frac{1}{2} \underbrace{\| \mathbf{d}_{\mathrm{real}} \|^2}_{\text{Attractive Term}} - \frac{1}{2} \underbrace{\| \mathbf{d}_{\mathrm{fake}} \|^2}_{\text{Repulsive Term}}.
\end{aligned}
\label{eq:dmd_potential}
\end{equation}
Taking the gradient of $\mathcal{V}_{\mathrm{DMD}}(x)$ with respect to $x$ yield $\nabla_x\mathcal{V}_{\mathrm{DMD}}(x)=\mathbf{d}_{\mathrm{real}}-\mathbf{d}_{\mathrm{fake}}
$. Here $\hat{x}_{0,\mathrm{real}}$ and $\hat{x}_{0,\mathrm{fake}}$ denote the denoised latent estimates from the real and the fake teacher, typically computed via Tweedie’s formula from the diffused state $\mathcal{F}_t(x)$. 
\end{proposition}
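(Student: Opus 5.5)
The argument has three parts: rewrite the scores as displacements via Tweedie's formula, turn the parameter step into a sample-space step by a first-order Taylor expansion, and identify the resulting direction as $\nabla_x\mathcal{V}_{\mathrm{DMD}}$ with the teacher targets held fixed.

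\textbf{Step 1: scores as displacements.} With $\mathcal{F}_t(x)=t\,x+(1-t)\epsilon$, write $x_t=\mathcal{F}_t(x)$ and $\sigma_t=1-t$. Tweedie's formula gives
\[
\hat{x}_{0,\cdot}=\frac{x_t+\sigma_t^2\, s_\cdot(x_t)}{t},
\qquad\text{hence}\qquad
s_\cdot(x_t)=\frac{t\,\hat{x}_{0,\cdot}-x_t}{\sigma_t^2}.
\]
Subtracting the real and fake scores cancels the $x_t$ term:
\[
s_{\mathrm{real}}-s_{\mathrm{fake}}=\frac{t}{\sigma_t^2}\big(\hat{x}_{0,\mathrm{real}}-\hat{x}_{0,\mathrm{fake}}\big)=\frac{t}{\sigma_t^2}\big(\mathbf{d}_{\mathrm{real}}-\mathbf{d}_{\mathrm{fake}}\big)\cdot(-1),
\]
where $\mathbf{d}_{\cdot}:=x-\hat{x}_{0,\cdot}$. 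The positive factor $w_t=t/\sigma_t^2$ is exactly the time weighting that DMD normalizes away in practice. I will state that either this weight is absorbed into $\eta_{\text{eff}}$, or, if $t$ is random, its expectation is absorbed.

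\textbf{Step 2: linearize the parameter update into sample space.} Substituting Step 1 into \cref{eq:dmd_grad} gives
\[
\nabla_\theta\mathcal{L}_{\mathrm{DMD}}=\mathbb{E}\big[w_t\, J^\top(\mathbf{d}_{\mathrm{real}}-\mathbf{d}_{\mathrm{fake}})\big],\qquad J=\partial G_\theta(z)/\partial\theta.
\]
To first order, $x_{new}=G_{\theta-\eta\nabla_\theta\mathcal{L}}(z)\approx x-\eta J\nabla_\theta\mathcal{L}$. For the single-sample contribution this becomes
\[
x_{new}\approx x-\eta\, w_t\, JJ^\top(\mathbf{d}_{\mathrm{real}}-\mathbf{d}_{\mathrm{fake}}).
\]
I then invoke the standard simplifying assumption that the NTK-like operator $JJ^\top$ is approximately isotropic, $JJ^\top\approx\lambda I$, and that cross-sample coupling is neglected (equivalently, one considers the per-sample functional gradient). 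This yields $x_{new}\approx x-\eta_{\text{eff}}(\mathbf{d}_{\mathrm{real}}-\mathbf{d}_{\mathrm{fake}})$ with $\eta_{\text{eff}}=\eta\lambda w_t$.

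\textbf{Step 3: identify the potential.} Treat $\hat{x}_{0,\mathrm{real}}$ and $\hat{x}_{0,\mathrm{fake}}$ as stop-gradient targets, which matches how DMD computes them without backpropagating through the teachers. Then differentiating $\mathcal{V}_{\mathrm{DMD}}$ in \cref{eq:dmd_potential} directly gives
\[
\nabla_x\mathcal{V}_{\mathrm{DMD}}=(x-\hat{x}_{0,\mathrm{real}})-(x-\hat{x}_{0,\mathrm{fake}})=\mathbf{d}_{\mathrm{real}}-\mathbf{d}_{\mathrm{fake}},
\]
which closes the argument.

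\textbf{Main obstacle.} The delicate step is Step 2. Replacing $JJ^\top$ by a scalar, and ignoring both the dependence of the targets on $x$ and the coupling between samples, are modeling approximations rather than identities. I would make them explicit as the meaning of ``under the first-order approximation'' and note that without isotropy one gets a preconditioned descent step with $\eta_{\text{eff}}$ replaced by the PSD matrix $\eta w_t JJ^\top$. That matrix still gives a descent direction for $\mathcal{V}_{\mathrm{DMD}}$, which preserves the push-pull interpretation.
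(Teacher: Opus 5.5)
Your proposal is correct and follows the paper's proof essentially step for step. Both use a first-order Taylor expansion $x_{new}-x\approx J\Delta\theta$. Both then apply Tweedie's formula to turn $s_{\mathrm{real}}-s_{\mathrm{fake}}$ into $-\tfrac{\alpha_t}{\sigma_t^2}(\mathbf{d}_{\mathrm{real}}-\mathbf{d}_{\mathrm{fake}})$, with $\alpha_t=t$ here. Finally, both identify $\mathbf{d}_{\mathrm{real}}-\mathbf{d}_{\mathrm{fake}}$ as $\nabla_x\mathcal{V}_{\mathrm{DMD}}$ with the targets held fixed; you only swap the order of the Tweedie and linearization steps.

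Your explicit assumptions ($JJ^\top\approx\lambda I$, stop-gradient targets, neglected cross-sample coupling) are exactly what the paper uses implicitly when it ``absorbs'' the PSD matrix $JJ^\top$ into the scalar $\eta_{\text{eff}}$. Your version is therefore a more careful statement of the same argument.
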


From the optimization perspective in Proposition \ref{prop:dmd_gd}, the parameter-space update of DMD is equivalent to a push-pull navigation task in the latent space. Here, the student $G_{\theta}$ reconciles its current position by moving toward the real teacher's target (attracted by $\mathbf{d}_\mathrm{real}$) while moving away from the region estimated by the fake teacher (repelled by $\mathbf{d}_\mathrm{fake}$).

Crucially, the reliability of the attractive force $\mathbf{d}_{\text{real}}$ depends on the real teacher's ability to accurately predict clean targets. We define the region where this competence fails as the \emph{Forbidden Zone}:

\begin{definition}[Forbidden Zone $\mathcal{Z}_f$]
Let $E_{\text{real}}(x) = -\log p_{\text{real}}(x)$ denote the energy potential of the target distribution. The \emph{Forbidden Zone} $\mathcal{Z}_f$ is defined as the high-energy regime beyond the teacher's empirical support:
\begin{equation}
    \mathcal{Z}_f = \{ x \in \mathcal{X} \mid E_{\text{real}}(x) > \gamma \}
\end{equation}
where $\gamma$ is a threshold of teacher competence. Inside $\mathcal{Z}_f$, the student produces severely distorted samples that lie outside the real teacher’s training manifold.
\end{definition}

By viewing DMD as a navigator of latent space, we precisely diagnose the systemic optimization failures in $\mathcal{Z}_f$ through the energy landscapes shown in Fig.~\ref{fig:standard_dmd_disadvantage}. Specifically, within $\mathcal{Z}_f$, the real teacher’s energy surface $E_{\text{real}}$ becomes fractured and ill-posed due to a lack of empirical support (Fig.~\ref{fig:standard_dmd_disadvantage}), causing the attractive force $\mathbf{d}_{\text{real}}$ to yield hallucinated and incoherent gradients. 

Simultaneously, distorted samples reside at the extreme tails of the student’s distribution where the fake teacher's energy $E_{\text{fake}}$ is identically flat (Fig.~\ref{fig:standard_dmd_disadvantage}b). This leads to a vanishing repulsive force, depriving the student of the propulsion required to escape the corrupted region. Such a dual collapse results in an optimization stalemate: the noisy pull and negligible push fail to produce a effective force $\nabla_x \mathcal{V}_{\mathrm{DMD}}$, therefore hindering the effectiveness of distillation.

\paragraph{Re-interpreting Prior Art.}
This optimization perspective provides a unified lens for re-interpreting recent advances in the DMD family.
Rather than viewing these methods as ad hoc modifications, we express them as specific instantiations of a
\textbf{generalized DMD gradient operator}:
\begin{equation}
\mathbf{g}
=
\mathcal{H}
\big(
\mathbf{d}_{\mathrm{real}}(x, \mathcal{F}_{t}, \phi),
\mathbf{d}_{\mathrm{fake}}(x, \mathcal{F}_{t}, \psi)
\big)
+
\lambda \cdot \mathbf{F}_{\mathrm{ext}},
\label{eq:general_form}
\end{equation}
where $\mathbf{d} = x - \hat{x}_0$ denotes the latent displacement, and
$\mathcal{H}(\cdot, \cdot)$ is a adaptive operator that composes
the attractive and repulsive signals induced by the real and fake teachers.
Here, $\mathcal{F}_{t}$ denotes the re-noising scale, and $\mathbf{F}_{\mathrm{ext}}$ represents auxiliary forces
beyond the vanilla DMD objective.

Within this framework, substituting~\cref{eq:dmd_potential_1,eq:dmd_potential} reveals that standard DMD recovers a linear case of $\mathcal{H}$:
\begin{equation}
\mathcal{H}_{\mathrm{std}}(\mathbf{d}_{\mathrm{real}}, \mathbf{d}_{\mathrm{fake}})
=
\mathbf{d}_{\mathrm{real}}
-
\mathbf{d}_{\mathrm{fake}},
\label{eq:linear_dmd}
\end{equation}
which induces a fixed contrastive gradient field and serves as the baseline optimization dynamics. We then demonstrate in~\cref{tab:unified_dmd} how~\cref{eq:general_form} unifies various existing methods, with~\cref{fig:unified_dmd} providing visual intuition for this framework.

Several methods introduce an auxiliary force $\mathbf{F}_{\text{ext}}$ to provide a non-zero recovery gradient when the primary displacement fields $\mathbf{d}_{\text{real}}$ and $\mathbf{d}_{\text{fake}}$ enter a deadlock. Standard \textbf{DMD}~\citep{yin2024one} utilizes an $L_2$ regression loss, acting as a static tethering force $\mathbf{F}_{\text{ext}} = x - x_{gt}$ that anchors the student to ground-truth data. \textbf{DMD2}~\citep{yin2024improved} replaces this with an adversarial force $\mathbf{F}_{\text{ext}} = \nabla_x \log D(x)$, creating an adversarial fence to deflect samples away from $\mathcal{Z}_f$. Similarly, \textbf{DMDR}~\citep{jiang2025distribution} alternates between DMD updates and RL steps, where the latter introduces an asynchronous steering force driven by reward gradients to ensure the trajectory remains within high-fidelity regions.

Other methods try to circumvent $\mathcal{Z}_f$ by manipulating either the noise scale $\mathcal{F}_{t}$ or the teacher state $\phi$. \textbf{D-DMD}~\citep{liu2025decoupled} designs a novel re-noising schedule, its essence lies in lowering the risk of the real teacher estimating ill-posed scores within $\mathcal{Z}_f$ by shifting samples toward more reliable noise regimes. In contrast, \textbf{MagicDistillation}~\citep{shao2025magicdistillation} temporarily pulls the teacher's empirical support toward the student's current distribution ($\phi_\theta \to \phi_{fix}$) via LoRA. This induces a manifold shrinking effect, reducing the measure of $\mathcal{Z}_f$ during early training stages. We provide the corresponding analytical understanding in \cref{sec:proof_prior_art}.

In summary, to the best of our knowledge, this work is the first to establish a unified optimization framework that theoretically categorizes diverse DMD paradigms as specific strategies for mitigating \emph{Forbidden Zones}.

\begin{figure*}[t]
    \centering
    \includegraphics[width=0.9\linewidth]{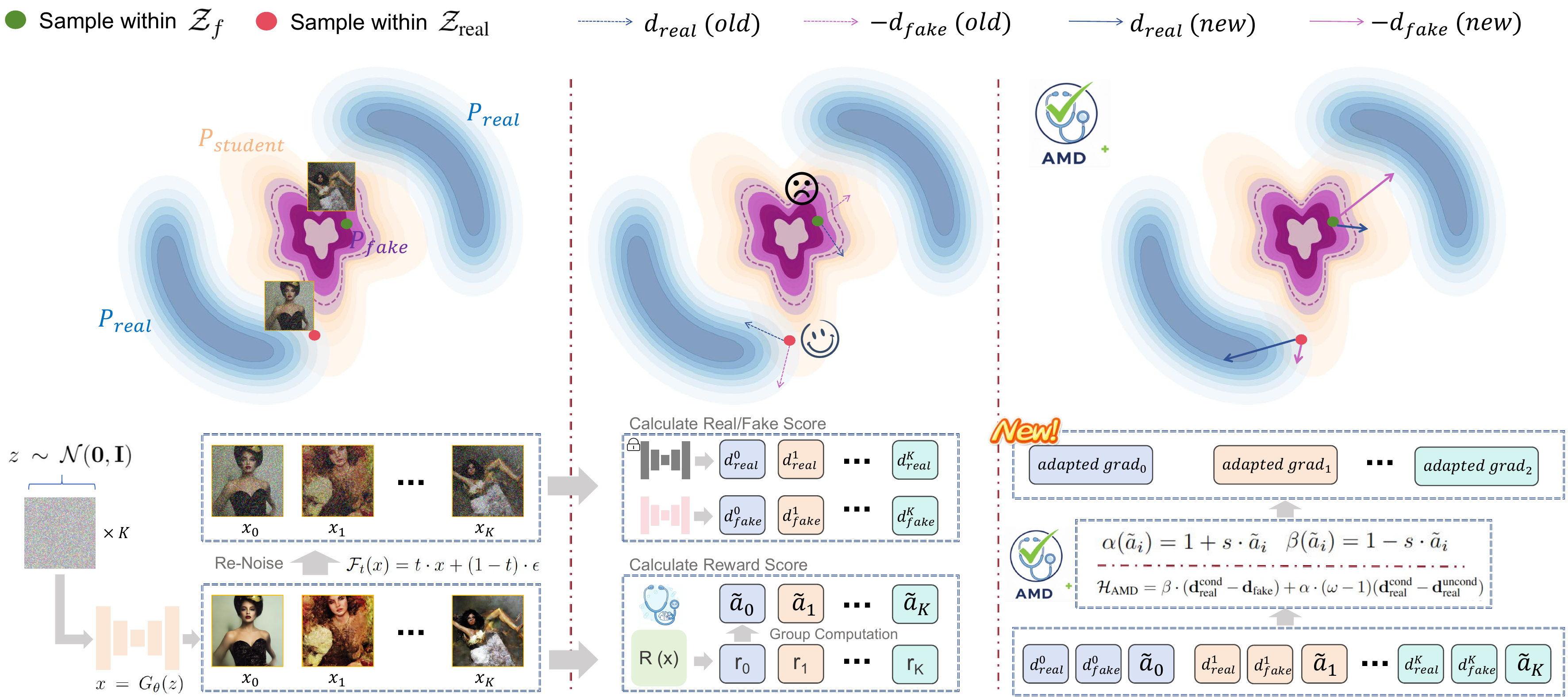}
    \caption{\textbf{Overview of AMD.} The framework operates in three stages: 
    \textbf{(Left)} Group Generation \& Re-noising: For each prompt, the student $G_\theta$ produces a group of samples $\{x_i\}_{i=1}^K$, which are subsequently perturbed by the forward operator $\mathcal{F}_t$. 
    \textbf{(Middle)}
    Reward-aware Diagnosis: A reward model $R(\cdot)$ serves as a proxy to pinpoint samples (e.g., $x_K$) trapped in the \emph{Forbidden Zone}  ($\mathcal{Z}_f$), where the real teacher's score $s_{\text{real}}$ becomes ill-posed. 
    \textbf{(Right)} Dynamic Score Adaption: Through the adaptive operator $\mathcal{H}_{AMD}$,  AMD rectifies the combination of $\mathbf{d}_{real}$
 and $\mathbf{d}_{fake}$ to derive an optimized gradient direction, thereby facilitating a rapid escape from the \emph{Forbidden Zone} and enabling the model to surpass the teacher under reward guidance.}
    \label{fig:framework}
    \vspace{-0.2cm}
\end{figure*}

\subsection{Dynamic Score Adaptation via AMD}
\label{sec:3.2}

To address $\mathcal{Z}_{f}$ pathologies, a straightforward intuition is to rebalance the contribution of the real and fake teachers based on the sample's current state. Specifically, one might aim to let $\mathbf{d}_{\mathrm{fake}}$ dominate within $\mathcal{Z}_{f}$ to provide the necessary repulsive "push" for escape, while ensuring $\mathbf{d}_{\mathrm{real}}$ governs high-fidelity regions to refine the distillation. This leads to a \textbf{naive adaptive strategy}:
\begin{equation}
\mathcal{H}_{\mathrm{naive}}(\mathbf{d}_{\mathrm{real}}, \mathbf{d}_{\mathrm{fake}})
=
\alpha(\tilde{a}) \cdot \mathbf{d}_{\mathrm{real}}
-
\beta(\tilde{a}) \cdot \mathbf{d}_{\mathrm{fake}},
\label{eq:naive_dmd}
\end{equation}
where $\alpha$ and $\beta$ are scalar coefficients controlled by the reward-aware advantage $\tilde{a}$. However, we contend that such coarse-grained rescaling is fundamentally insufficient. 

As analyzed in \cref{sec:3.1}, the optimization failure within $\mathcal{Z}_f$ stems not merely from insufficient gradient magnitude, but from a structural collapse of the gradient field where teacher guidance becomes ill-posed. As empirically demonstrated in our 2D analysis (Fig.~\ref{fig:navie_vs_amd} of~\cref{sec:appendix_analytical_results}), this can lead to \textbf{destructive interference} between attractive and repulsive forces, magnifying conflicting signals that destabilize training trajectories and eventually trigger a total distillation collapse. This observation suggests that a more fine-grained, \emph{component-wise} intervention is required to decouple beneficial guidance from detrimental noise.

\paragraph{Decomposing the Gradient Field.} To resolve this dilemma, we draw inspiration from the analytical decomposition proposed in D-DMD~\citep{liu2025decoupled}.
Let the real teacher's guidance be $\mathbf{d}_{\text{real}} = \mathbf{d}_{\text{real}}^{\text{uncond}} + \omega (\mathbf{d}_{\text{real}}^{\text{cond}} - \mathbf{d}_{\text{real}}^{\text{uncond}})$, where $\omega$ is the CFG scale. The standard DMD gradient operator (Eq. \ref{eq:linear_dmd}) can be expanded and rearranged as follows:
\begin{equation}
\begin{aligned}
    \mathcal{H}_{\mathrm{std}} &= \mathbf{d}_{\text{real}} - \mathbf{d}_{\text{fake}} \\
    &= \left[ \mathbf{d}_{\text{real}}^{\text{uncond}} + \omega (\mathbf{d}_{\text{real}}^{\text{cond}} - \mathbf{d}_{\text{real}}^{\text{uncond}}) \right] - \mathbf{d}_{\text{fake}} \\
    &= \underbrace{(\mathbf{d}_{\text{real}}^{\text{cond}} - \mathbf{d}_{\text{fake}})}_{\text{DM Term}(\mathbf{d}_{dm})} + \underbrace{(\omega - 1)(\mathbf{d}_{\text{real}}^{\text{cond}} - \mathbf{d}_{\text{real}}^{\text{uncond}})}_{\text{CA Term} (\mathbf{d}_{ca})}.
\end{aligned}
\label{eq:decomposition_derivation}
\end{equation}

This decomposition reveals the structural asymmetry in the gradient: the DM Term directly anchors the student to the valid data manifold, whereas the CA Term enforces semantic conditions. Consequently, to robustly escape $\mathcal{Z}_f$, a conservative update that prioritizes the DM Term is necessary; once the student resides in safer regions, the CA Term can be amplified to accelerate distillation.

Building on this insight, we define the \textbf{Dynamic Score Adaptation} strategy via
\begin{equation}
\label{eq:decoupled_grad}
    \mathcal{H}_{\text{AMD}} = \beta \cdot (\mathbf{d}_{\text{real}}^{\text{cond}} - \mathbf{d}_{\text{fake}}) 
    + 
    \alpha \cdot (\omega - 1) (\mathbf{d}_{\text{real}}^{\text{cond}} - \mathbf{d}_{\text{real}}^{\text{uncond}}),
\end{equation}
where $\alpha$ and $\beta$ are dynamic coefficients adaptively modulated according to sample reliability.

\paragraph{Reward as a Diagnostic Proxy} Since the energy potential $E_{\text{real}}(\cdot)$ is analytically intractable in high-dimensional space, we employ a pre-trained reward model $R(\cdot)$ as a practical proxy to identify samples residing in the \emph{Forbidden Zone} $\mathcal{Z}_f$, which in turn informs the adaptive coefficients $\alpha$ and $\beta$.

\begin{assumption}[Preference–Competence Alignment]
\label{assump:alignment}
Let $\mathcal{Z}_{\text{pref}}(\tau) = \{x \mid R(x) > \tau\}$ denote the high-reward region.
We assume that samples in $\mathcal{Z}_{\text{pref}}(\tau)$ are \emph{statistically concentrated} in the low-energy regime of the real teacher as
\begin{equation}
\mathbb{P}\big(E_{\text{real}}(x) \le \gamma' \mid x \in \mathcal{Z}_{\text{pref}}(\tau)\big) \ge 1 - \delta,
\end{equation}
where $\gamma' < \gamma$ and $\delta \ll 1$.
\end{assumption}

Inspired by~\citet{matsutani2025rl}, which shows that reward model fundamentally contracts the search space by concentrating probability mass onto a narrow, high-reward subset of the output distribution, we posit Assumption~\ref{assump:alignment}, which implies that the reward model implicitly induces a reliability ordering over the sample space. As a result, regions of low reward naturally serve as a practical proxy for~$\mathcal{Z}_f$,
where the real teacher’s score estimates become unreliable and potentially misleading.

To eliminate prompt-dependent scale variance, we adopt a group-relative sensing strategy inspired by GRPO \citep{shao2024deepseekmath}. For a group $\{x_0, \dots, x_K\}$ generated from the same prompt by student, we compute the normalized relative advantage $\tilde{a}_i$ as
\begin{equation}
    \tilde{a}_i = clip \left(\frac{R(x_i) - \mu_{g}}{\sigma_{g} + \epsilon},-1,1\right),
\end{equation}
where $\mu_{g}$ and $\sigma_{g}$ are the mean and standard deviation of rewards within the group. Equipped with $\tilde{a}_i$ as a diagnostic proxy for the Forbidden Zone $\mathcal{Z}_f$, we dynamically modulate the distillation weights $\alpha$ and $\beta$ in \cref{eq:decoupled_grad} through a linear adaptive rule:
\begin{equation}
    \alpha(\tilde{a}_{i}) = 1 + s \cdot \tilde{a}_i, \quad \beta(\tilde{a}_{i}) = 1 - s \cdot \tilde{a}_i,
    \label{eq:amd_trick_1}
\end{equation}
where $s \in \mathbb{R}^+$ is a sensitivity hyperparameter that governs the intensity of the reactive modulation.

This formulation effectively re-scales the potential field $\mathcal{V}_{\text{DMD}}$ on a per-sample basis: for low-advantage samples ($\tilde{a}_i < 0$), the repulsive force is amplified to facilitate an active escape from $\mathcal{Z}_f$; for high-advantage samples ($\tilde{a}_i > 0$), the attractive force is prioritized to refine generation fidelity. The conceptual workflow of AMD is illustrated in Fig.~\ref{fig:framework}, and the detailed algorithm is detailed in \cref{alg:amd}.

\subsection{Repulsive Landscape Sharpening}
\label{sec:3.3}
While the score adaption in \cref{sec:3.2} determines \emph{how} to combine gradients, it relies on a fundamental assumption: that the fake teacher can provide a meaningful repulsive signal $\mathbf{d}_{fake}$ within the \emph{Forbidden Zone} $\mathcal{Z}_{f}$. 

In standard DMD~\citep{yin2024one}, the fake teacher is primarily designed to promote diversity. By uniformly modeling the student's distribution, it exerts a dispersive force that prevents mode collapse. However, our analysis in \cref{sec:3.1} assigns a critical new role to the fake teacher: \textbf{correction}.  To effectively repel the student from the \emph{Forbidden Zone}, the fake teacher must first learn to recognize it. 

To address this, we introduce \textbf{Repulsive Landscape Sharpening}, which reallocates the training focus of fake teacher $\psi$ towards failure cases. We modify the distillation objective by incorporating an advantage-aware weight $W(\tilde{a}_i)$ as 
\begin{equation}
\mathcal{L}_{\psi} = \mathbb{E}_{z, t, \epsilon} \left[ W(\tilde{a}_i) \cdot \| \epsilon - \psi(x_t, t) \|^2 \right],
\label{eq:amd_trick_2}
\end{equation}
where $x_{t}=\mathcal{F}_{t}(G_{\theta}(z))$ and $W(\cdot)$ imposes a heavier penalty on low-advantage samples (e.g., $W(\tilde{a}_i) = e^{-\tilde{a}_i}$). 

This mechanism ensures that the fake teacher learns a highly sensitive energy landscape specifically tailored to the student's current weaknesses. Mathematically, this creates a steep likelihood slope, maximizing the magnitude of the repulsive gradient $\mathbf{d}_{\text{fake}} \propto \nabla \log p_{\text{fake}}$ during the student's update. Thus, the fake teacher transforms from a passive density estimator into an active failure detector, providing the ``necessary kick'' to escape the Forbidden Zone.

\section{Experiments}
In this section, we conduct extensive experiments to validate AMD. As a natural corollary of the framework in \cref{sec:3.1}, these experiments primarily serve to verify the rationality of our theoretical analysis.

\subsection{Experiments Setting}
We evaluate AMD across a diverse range of diffusion backbones, modalities, and architectural designs. For image generation, we conduct experiments on SiT, and SDXL~\cite{podell2023sdxlimprovinglatentdiffusion}; for video generation, we evaluate AMD on Wan2.1~\cite{wan2025wan} and LongLive~\cite{yang2025longlive}. To ensure comprehensive coverage, we report results on multiple benchmarks, including MS-COCO~\cite{lin2015microsoftcococommonobjects}, ImageNet~\cite{russakovsky2015imagenetlargescalevisual}, DrawBench~\cite{saharia2022photorealistictexttoimagediffusionmodels}, HPD~\cite{wu2023humanpreferencescorev2}, GenEval~\cite{ghosh2023genevalobjectfocusedframeworkevaluating}, VBench~\cite{huang2023vbenchcomprehensivebenchmarksuite}, VBench++~\cite{huang2025vbench++}, VideoGen-Eval~\cite{yang2025videogenevalagentbasedvideogeneration} and TA-Hard~\cite{liu2025improvingvideogenerationhuman}. Regarding the reward models, we utilize HPSv2~\cite{wu2023humanpreferencescorev2} for SDXL, DINOv2~\cite{caron2021emergingpropertiesselfsupervisedvision} for DiT-based models~(SiT)~\cite{ma2024sitexploringflowdiffusionbased}, and VideoAlign~\cite{liu2025improvingvideogenerationhuman} for video models. Detailed configurations are provided in~\cref{apd:benchmark_and_dataset}.

\begin{table}[h]
    \centering
    \caption{Comparison of different text-to-image models on GenEval~\cite{ghosh2023genevalobjectfocusedframeworkevaluating} benchmark. (base model: SDXL) The best results in each category are highlighted in \textbf{bold}.}
    \label{tab:t2i_geneval}
    \setlength{\tabcolsep}{5pt} % 稍微减小列间距以适应更多列
    \renewcommand{\arraystretch}{1.2} 
    
    \resizebox{1.0\linewidth}{!}{\begin{tabular}{l|c|c|c|c}
\toprule
\textbf{Method} & \textbf{\#Params} & \textbf{Resolution} & \textbf{NFEs} & \textbf{Overall~$\uparrow$} \\
\midrule
\multicolumn{2}{c}{\textit{\textbf{Pretrained Models}}} \\
\midrule
SDXL~\cite{podell2023sdxlimprovinglatentdiffusion} & 2.6B & $1024 \times 1024$ & $50 \times 2$ & 0.55 \\
FLUX.1-dev~\cite{flux2024} & 12.0B & $1024 \times 1024$ & 50 & 0.66 \\
\midrule
\multicolumn{5}{c}{\textit{\textbf{Pretrained Models}}} \\
\midrule
SDXL-LCM~\cite{luo2023latentconsistencymodelssynthesizing} & 2.6B & $1024 \times 1024$ & 4 & 0.50 \\
SDXL-Turbo~\cite{podell2023sdxlimprovinglatentdiffusion} & 2.6B & $512 \times 512$ & 4 & 0.56 \\
SDXL-Lightning~\cite{lin2024sdxllightningprogressiveadversarialdiffusion} & 2.6B & $1024 \times 1024$ & 4 & 0.53 \\
SDXL-DMD2~\cite{yin2024improved} & 2.6B & $1024 \times 1024$ & 4 & 0.51 \\
SDXL-DMDR~\cite{jiang2025distribution} & 2.6B & $1024 \times 1024$ & 4 & 0.56 \\
\rowcolor{gray!15} SDXL-AMD (Ours) & 2.6B & $1024 \times 1024$ & 4 & \textbf{0.57} \\
\bottomrule
\end{tabular}
}
\end{table}

\subsection{Main Results}

\paragraph{Image Generation} 

We first conduct experiments on text-to-image generation. We follow standard evaluation protocols and first assess performance on the 10k COCO2014-val dataset. As presented in Tab.~\ref{tab:t2i_coco}, our method significantly outperforms existing state-of-the-art acceleration techniques. Specifically, AMD achieves the highest ImageReward~(88.37) and HPSv2~(31.25), surpassing strong competitors like PCM and D-DMD. To further verify the robustness of our method, we extend our evaluation to multiple popular benchmarks. As shown in Tab.~\ref{tab:t2i_hpd}, AMD consistently surpasses DMD2 across all stylistic categories (Anime, Photo, Concept-art, and Painting) on the HPDv2 dataset. Additionally, on the object-focused GenEval benchmark (Tab.~\ref{tab:t2i_geneval}), AMD secures the top rank among distilled models with an overall score of 0.57, exhibiting superior prompt adherence~(please refer to Tab.~\ref{tab:t2i_geneval_all} for detailed scores of each dimension). \textbf{It is worth noting that since the official weights for DMDR and D-DMD are not publicly available, we reference the results directly from their respective papers for comparison.} 

Furthermore, we provide a human preference winning rate analysis in Fig.~\ref{figure:winning-rate-amd}. The results indicate that AMD holds a substantial lead over DMD2 on both DrawBench and HPDv2, confirming that our method generates images with superior quality. Due to the page limit, we provide additional results and visualization in~\cref{sec:appendix_quantitative_results} and ~\cref{sec:appendix_qualtative_results}.

\begin{table}[t]
    \centering
    \caption{Quantitative comparison of text-to-image task on 10k COCO2014-val prompts~\cite{lin2015microsoftcococommonobjects}. (base model: SDXL). The results validate the effectiveness of our proposed AMD.}
    \label{tab:t2i_coco}
    \setlength{\tabcolsep}{5pt} % 稍微减小列间距以适应更多列
    \renewcommand{\arraystretch}{1.2} 
    
    \resizebox{.9\linewidth}{!}{\begin{tabular}{lcc}
\toprule
Method & ImageReward $\uparrow$ & HPSv2 $\uparrow$ \\
\midrule
LCM~\cite{luo2023latentconsistencymodelssynthesizing}      & 39.56 & 28.00 \\
Turbo~\cite{sauer2024adversarial}   & 46.09 & 29.83 \\
Lightning~\cite{lin2024sdxllightningprogressiveadversarialdiffusion}  & 57.48 & 30.30 \\
Flash~\cite{chadebec2025flash} & 19.04 & 27.71 \\
PCM~\cite{wang2024phasedconsistencymodels}       & 64.73 & 30.76 \\
DMD2~\cite{yin2024improved}      & 71.01 & 30.64 \\
D-DMD~\cite{liu2025decoupled}                     & 78.61 & 30.34 \\
\midrule
\rowcolor{gray!15} AMD~(Ours) & \textbf{88.37} & \textbf{31.25} \\
\bottomrule
\end{tabular}
}
\end{table}

\begin{table}[t]
    \centering
    \caption{Quantitative comparison on 50K-ImageNet ($256\times256$). (base model: SiT-XL/2). Notably, DMDR exhibits reward hacking behavior, attaining high IS at the cost of poor FID. AMD, however, shows a superior balance between image quality and diversity.}
    \label{tab:dit_coco_ablation}
    \small
    \renewcommand{\arraystretch}{1.0} 
    \begin{tabular}{lccc} % 只有 4 列
        \toprule
        \textbf{Method} & FID~$\downarrow$ & sFID~$\downarrow$ & IS~$\uparrow$ \\
        \midrule
        DMD     & \underline{3.5573} & \underline{5.8499} & 314.42 \\
        DMDR    & 9.6341 & 7.3541 & \textbf{391.79} \\
        \midrule
        \rowcolor{gray!15} AMD (Ours) & \textbf{3.4690} & \textbf{5.7464} & \underline{316.02} \\
        \bottomrule
    \end{tabular}
\end{table}

To further isolate the intrinsic effectiveness of our distillation mechanism, we conduct experiments on the class-to-image generation on ImageNet. Using SiT-XL/2~\citep{ma2024sitexploringflowdiffusionbased} as the backbone, we compare our proposed \textbf{AMD} against standard Standard DMD and DMDR~\citep{jiang2025distribution}. The implementation details are provided in~\cref{sec:appendix_hyper_setting}. As shown in Tab.~\ref{tab:dit_coco_ablation}, AMD achieves superior FID (3.4690) and sFID (5.7464) to DMD, demonstrating its capacity to sculpt the generation manifold rather than merely matching the teacher. Unlike DMDR, which suffers from severe mode collapse, AMD's tempered refinement provides a balanced optimization path that avoids reward hacking while steadily enhancing overall quality.

\paragraph{Video Generation} We extend AMD to streaming video generation task using Wan2.1-1.3B as the backbone, following the first stage of LongLive~\citep{yang2025longlive} protocol.
As shown in Tab.~\ref{tab:self_forcing_tab_1}, AMD surpasses the LongLive baseline with a Total Score of \textbf{82.21 vs. 81.42}.
Tab.~\ref{tab:self_forcing_tab_2} further reveals substantial gains on VBench, where AMD boosts Motion Quality by $\sim$67\% (35.51 $\to$ 59.26) and the Total Score to 197.45. The qualitative comparisons are shown in Fig.~\ref{fig:case_longlive}.

While Visual and Motion Quality improve significantly, the slight trade-off in Text Alignment (TA) is an anticipated outcome of the VideoAlign reward, which explicitly prioritizes dynamic fidelity and motion aesthetics over rigid semantic constraints. To further demonstrate the scalability of our approach, we provide additional results using the larger Wan2.1-14B backbone in \cref{sec:appendix_quantitative_results}, confirming that AMD's effectiveness persists across model scales.

\begin{table}[t]
        \centering
        \caption{Quantitative comparison on the streaming video generation task. Base model: Wan-1.3B. The results on VBench show that AMD outperforms the baseline in terms of Total Score.}
        \label{tab:self_forcing_tab_1}
        \small
        \resizebox{0.95\linewidth}{!}{
        \begin{tabular}{lccc} % 删掉一列，从lcccc变为lccc
            \toprule
            \multirow{2}{*}{Model} & \multicolumn{3}{c}{Evaluation scores $\uparrow$} \\
            \cmidrule(lr){2-4} % 跨度从3-5调整为2-4
            & Total & Quality & Semantic \\
            \midrule
            LTX-Video~\citep{hacohen2024ltx}  & 80.00 & 82.30 & \textbf{70.79} \\
            MAGI-1~\citep{teng2025magi}        & 79.18 & 82.04 & 67.74 \\
            CausVid~\citep{yin2025slow}       & 81.20 & 84.05 & 69.80 \\
            LongLive~\citep{yang2025longlive}  & \underline{81.42} & \underline{84.20} & \underline{70.38} \\
            \midrule
            \rowcolor{gray!15} AMD~(Ours)       & \textbf{82.21} & \textbf{85.29} & 69.91 \\
            \bottomrule
        \end{tabular}
}
\end{table}

\begin{table}[t]
    \centering
    \caption{Quantitative comparison on the streaming video generation task. (base model: Wan-1.3B). We compare AMD against the LongLive using preference-based metrics. The results demonstrate that AMD achieves significant improvements in Visual Quality (VQ) and Motion Quality (MQ), leading to higher overall scores.}
    \label{tab:self_forcing_tab_2}
    \small 
    \resizebox{0.95\linewidth}{!}{
    \begin{tabular}{cccccc}
    \toprule
    Dataset & Model & VQ $\uparrow$ & MQ $\uparrow$ & TA $\uparrow$ & Total $\uparrow$\\
    \midrule
    
    % 第一组：VBench
    % 第一行：第一列留空
    & LongLive & 30.10 & 35.51 & \textbf{107.98} & 173.59 \\
    % 第二行：使用 \rowcolor，并且在这里定义 \multirow，向上跨 2 行
    \rowcolor{gray!15} \multirow{-2}{*}{\shortstack{VBench \\ \citep{huang2023vbenchcomprehensivebenchmarksuite}}} 
    & AMD~(Ours) & \textbf{37.39} & \textbf{59.26} & 100.79 & \textbf{197.45}  \\
    
    \midrule
    
    % 第二组：VideoGen-Eval
    & LongLive & -24.53 & -13.07 & \textbf{118.57} & 80.96 \\ 
    \rowcolor{gray!15} \multirow{-2}{*}{\shortstack{VideoGen-Eval \\ \citep{yang2025videogenevalagentbasedvideogeneration}}} 
    & AMD~(Ours) & \textbf{-21.93} & \textbf{17.87}  & 107.98 & \textbf{87.84}   \\
    
    \midrule
    
    % 第三组：TA-Hard
    & LongLive & -0.9908 & 13.19 & \textbf{27.18} & 39.39 \\
    \rowcolor{gray!15} \multirow{-2}{*}{\shortstack{TA-Hard \\ \citep{liu2025improvingvideogenerationhuman}}} 
    & AMD~(Ours)  & \textbf{-0.1412}  & \textbf{28.74}    & 14.65 & \textbf{43.52}   \\
    
    \bottomrule
\end{tabular}
}
\end{table}

\begin{figure}[h]
    \centering
    \includegraphics[width=1\linewidth]{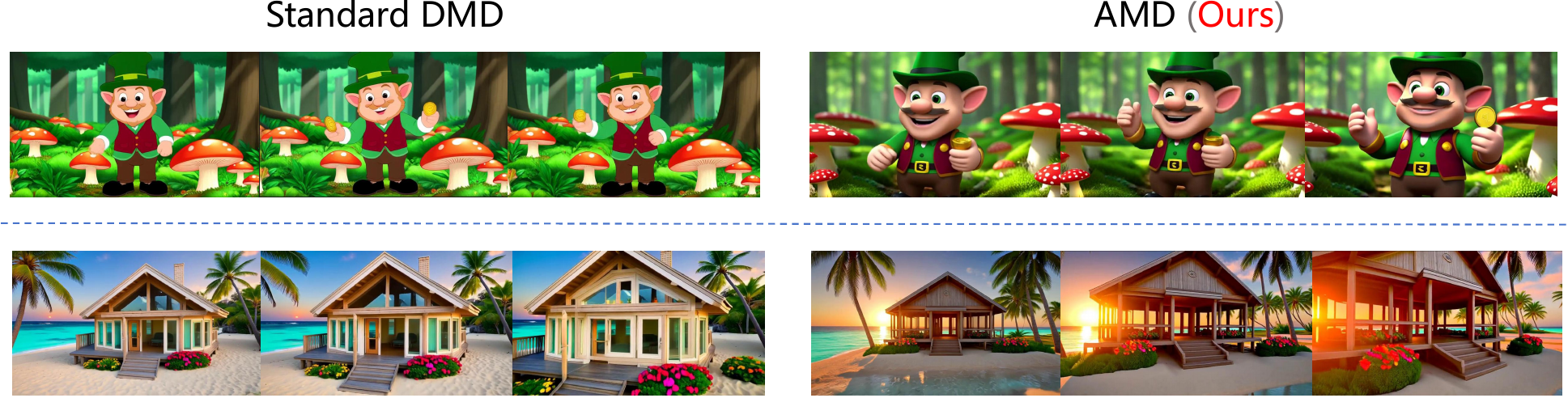}
    \caption{Qualitative comparison on text-to-video generation. Compared to standard DMD (e.g., LongLive), AMD demonstrates superior visual fidelity and more coherent motion realism. Additional qualitative results are provided in Fig.~\ref{fig:supplement_video_case}.}
    \label{fig:case_longlive}
\end{figure}

\begin{table*}[!t]
    \centering
    \caption{Quantitative comparison of text-to-image task on HPDv2~\cite{wu2023humanpreferencescorev2} dataset. We compare AMD against DMD2 using a lot of preference-based metrics. Base model: SDXL.}
    \label{tab:t2i_hpd}
    \setlength{\tabcolsep}{5pt} % 稍微减小列间距以适应更多列
    \renewcommand{\arraystretch}{1.2} 
    \vspace{-0.1in}
    \resizebox{1.\linewidth}{!}{\begin{tabular}{cccccccccccccccc}
 \hline 
\multirow{2}{*}{Method} & \multicolumn{3}{c}{Anime}             & \multicolumn{3}{c}{Photo}                        & \multicolumn{3}{c}{Concept-art}                   & \multicolumn{3}{c}{Painting}                      & \multicolumn{3}{c}{Average}                       \\  \cmidrule(lr){2-4} \cmidrule(lr){5-7} \cmidrule(lr){8-10} \cmidrule(lr){11-13} \cmidrule(lr){14-16} 
                        & PickScore~$\uparrow$      & HPSv2~$\uparrow$ & ImageReward~$\uparrow$ & PickScore~$\uparrow$      & HPSv2~$\uparrow$          & ImageReward~$\uparrow$    & PickScore~$\uparrow$      & HPSv2~$\uparrow$          & ImageReward~$\uparrow$     & PickScore~$\uparrow$      & HPSv2~$\uparrow$          & ImageReward~$\uparrow$     & PickScore~$\uparrow$      & HPSv2~$\uparrow$          & ImageReward~$\uparrow$     \\ \hline
DMD2                    & 22.85          & 33.16 & 120.74      & 22.41          & 30.31          & 71.20          & 22.03          & 31.39          & 106.76          & 22.18          & 31.72          & 117.67          & 22.36          & 31.64          & 104.09          \\
\rowcolor{gray!15} AMD (Ours)              & \textbf{22.91} & 33.10 & 116.20      & \textbf{22.96} & \textbf{30.76} & \textbf{84.01} & \textbf{22.60} & \textbf{32.04} & \textbf{113.49} & \textbf{22.40} & \textbf{31.96} & \textbf{108.37} & \textbf{22.72} & \textbf{31.97} & \textbf{105.52} \\ \hline
\end{tabular}
}
\end{table*}

\subsection{Ablation Study}

\begin{figure}
    \centering
    \includegraphics[width=1\linewidth]{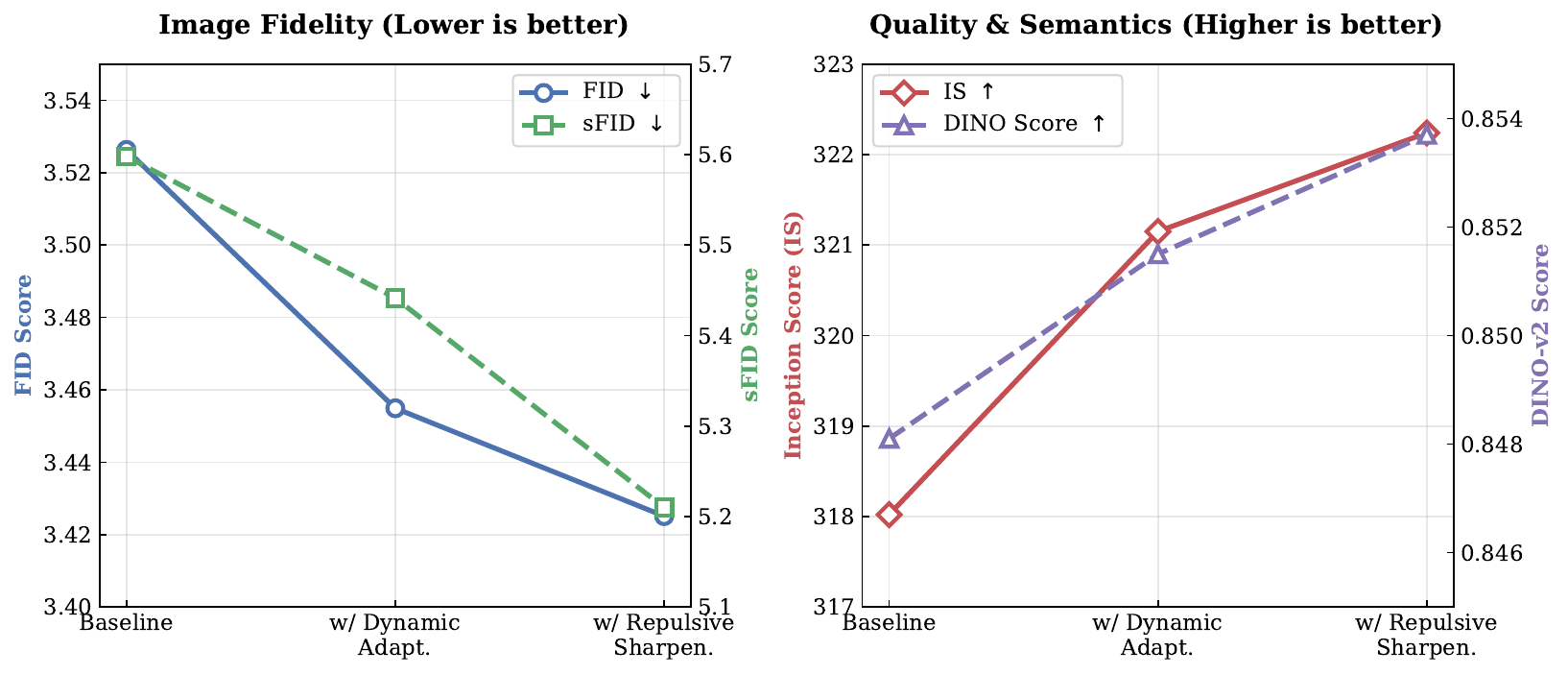}
    \caption{Component-wise ablation study on 50K-ImageNet (256$\times$256). (base model: SiT-XL/2). We investigate the contribution of each component in AMD. Dynamic Adapt denotes the Dynamic Score Adaptation (\cref{sec:3.2}), and Repulsive Sharpen denotes the Repulsive Landscape Sharpening (\cref{sec:3.3}).}
    \label{fig:abla_fig}
\end{figure}

\paragraph{Effectiveness of AMD Components.} AMD achieves comprehensive gains through two synergistic mechanisms: 
(i) \textbf{Dynamic Score Adaptation} provides fine-grained modulation of DMD updates, adaptively refining both the magnitude and direction of the distillation gradients based on the sample state. 
(ii) \textbf{Repulsive Landscape Sharpening} heightens the system's perception of failure regions, sharpening repulsive forces to more effectively identify and rectify pathological areas. 

As shown in \cref{fig:abla_fig}, we observe that FID and IS improve alongside rising reward scores throughout training, further validating the robustness and effectiveness of our optimization objective.

\paragraph{Training Dynamics and Stability.} Here, we analyze the training dynamics of AMD.
As shown in Fig.~\ref{fig:training_curves}, we track both Inception Score (IS) and DINO-based reward during distillation on SiT-XL/2. Compared with the baseline, AMD demonstrates consistently faster improvement and a more stable training trajectory across iterations. Notably, the increase in reward is well aligned with the improvement in IS, indicating that reward-aware dynamic score adaptation provides reliable diagnostic signals and effectively steers the student away from low-quality regions during training, rather than merely improving the final outcome.

\begin{figure}[h]
    \centering
    \includegraphics[width=1\linewidth]{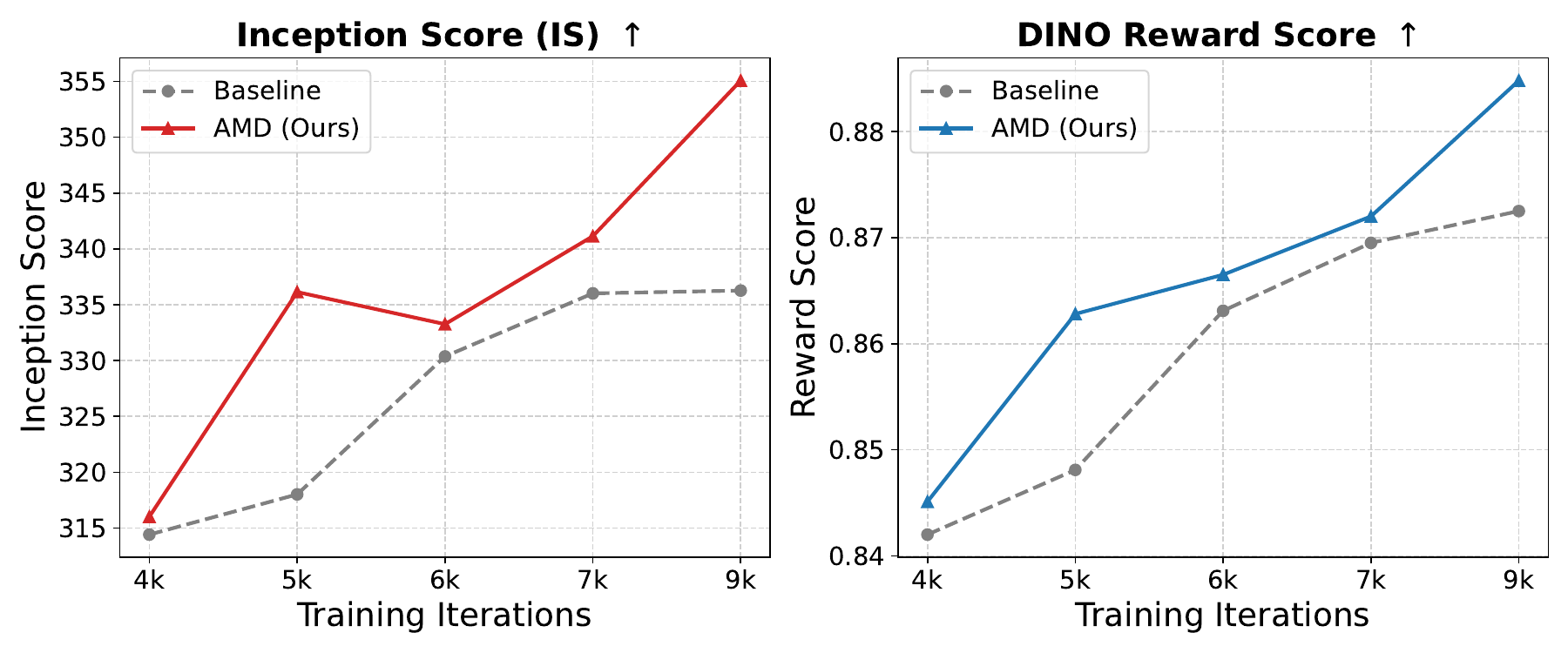}
    \caption{
        Reward–Quality Co-evolution during Distillation. AMD exhibits a synchronized increase in generation quality (IS) and reward scores, suggesting that reward-aware dynamic score adaptation effectively guides the student out of low-quality regions.
    }
    \label{fig:training_curves}
    \vspace{-0.35cm}
\end{figure}

\begin{figure}[h]
    \centering
    \includegraphics[width=1\linewidth]{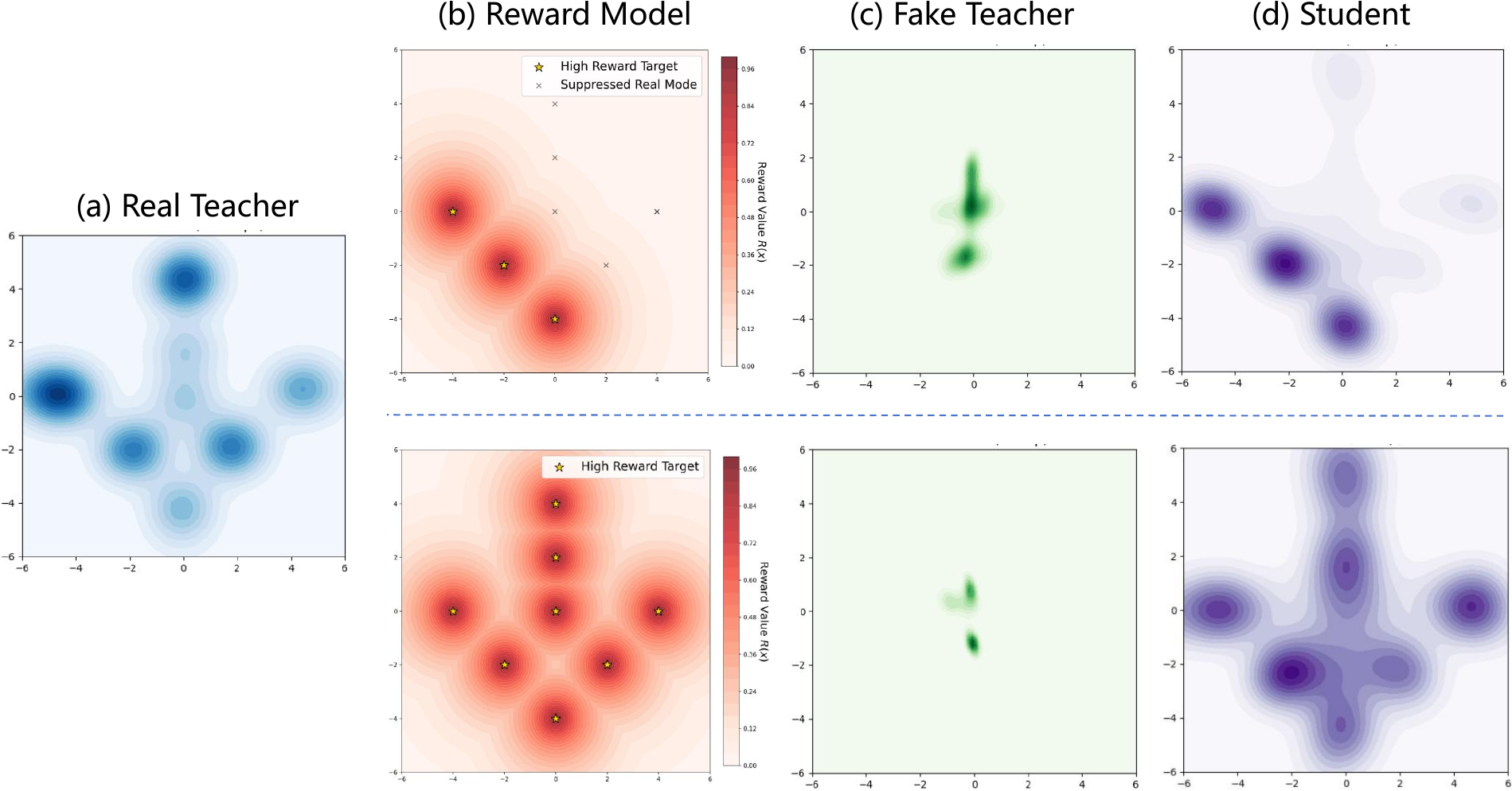}
    \caption{Reward-Driven Shaping.
\textbf{Top:} Under a selective reward (b), the Student (d) ignores the fixed Teacher's (a) gradients in suppressed zones ($\times$) to strictly match high-value modes ($\star$). 
\textbf{Bottom:} Global alignment recovers the full distribution.}
    \label{fig:abla_reward}
    \vspace{-0.02in}
\end{figure}

\paragraph{Impact of Reward Model.} To intuitively validate the reward model's role as a diagnostic proxy, we visualize training dynamics on a 2D multi-modal dataset in Fig.~\ref{fig:abla_reward}. We maintain a fixed Real Teacher (Col.~a) while varying the reward landscape (Col.~b).

In the Selective Guidance scenario (\textbf{Top Row}), where specific modes are suppressed by the reward model (marked $\times$), the Student (Col.~d) successfully converges only to high-reward targets, effectively overriding the Real Teacher's attractive gradients in these low-value regions.

In contrast, under Global Alignment (\textbf{Bottom Row}), the student faithfully recovers the full distribution.
This confirms that AMD effectively leverages the reward signal to dynamically reshape the generation manifold, ensuring the student avoids \emph{Forbidden Zones} even when they are supported by the original teacher. By actively sculpting the student's distribution according to the reward landscape, AMD enables the student to selectively learn useful modes and effectively \textbf{transcend the performance ceiling} of the original teacher.

\section{Conclusion}

In this work, we identify \emph{Forbidden Zones} as a fundamental failure regime in Distribution Matching Distillation (DMD), where unreliable guidance from the real teacher and vanishing repulsion from the fake teacher jointly stall optimization. To address this, we propose \textbf{Adaptive Matching Distillation (AMD)}. Empirically, AMD demonstrates robust effectiveness across both image and video generation benchmarks, successfully bridging the gap between perceptual quality and distributional diversity while overcoming the inherent limitations of the original teacher.

More broadly, our framework provides a unified optimization perspective on DMD-style methods, revealing them as implicit strategies for navigating corrupted regions. Due to space constraints, we discuss the limitations of our approach and outline future research directions in \cref{sec:Limitation}. By reframing distillation as a latent navigation task, AMD offers the community a novel lens to diagnose and rectify the inherent trade-offs in distribution matching. We hope this perspective encourages the development of more adaptive and self-correcting distillation paradigms that move beyond static alignment toward active generative refinement.

% Acknowledgements should only appear in the accepted version.

\clearpage
\section*{Impact Statement}

We present ADM~(Adaptive Matching Distillation), a framework designed for efficient and high-fidelity few-step generation via reward-aware distribution matching distillation. The training process of AMD relies on guidance from pretrained teacher models and reward models. While our experiments utilize standard academic datasets, we acknowledge that the safety and ethical alignment of the distilled student model are intrinsically tied to the quality of these foundational signals. Therefore, we emphasis on the responsible deployment of this technology. We advocate for the integration of content authentication mechanisms and strive to maximize the societal benefits of efficient generation while diligently mitigating potential risks associated with misuse, such as the creation of misleading deepfake content.

\nocite{langley00}

\bibliography{main}
\bibliographystyle{icml2026}

%%%%%%%%%%%%%%%%%%%%%%%%%%%%%%%%%%%%%%%%%%%%%%%%%%%%%%%%%%%%%%%%%%%%%%%%%%%%%%%
%%%%%%%%%%%%%%%%%%%%%%%%%%%%%%%%%%%%%%%%%%%%%%%%%%%%%%%%%%%%%%%%%%%%%%%%%%%%%%%
% APPENDIX
%%%%%%%%%%%%%%%%%%%%%%%%%%%%%%%%%%%%%%%%%%%%%%%%%%%%%%%%%%%%%%%%%%%%%%%%%%%%%%%
%%%%%%%%%%%%%%%%%%%%%%%%%%%%%%%%%%%%%%%%%%%%%%%%%%%%%%%%%%%%%%%%%%%%%%%%%%%%%%%
\newpage
\appendix
\onecolumn
\section{Implementation Details}
\label{apd:benchmark_and_dataset}

In this section, we provide an overview of the benchmarks, evaluation metrics, diffusion models and the hyperparameter settings in our main paper.

\subsection{Benchmark}

\paragraph{MS-COCO.} To assess zero-shot text-to-image synthesis performance, we utilize the MS-COCO 2014 validation set~\cite{lin2015microsoftcococommonobjects}. Following the evaluation protocols established in prior works like DMD~\cite{yin2024one} and DMD2~\cite{yin2024improved}, we adopt two distinct evaluation settings. First, to measure distributional fidelity, we generate 30,000 images~(COCO-30k) using randomly sampled prompts from the validation set. These samples are resized to $256 \times 256$ resolution and compared against the full reference set of 40,504 real images to compute FID scores~\cite{heusel2018ganstrainedtimescaleupdate} using the clean-fid library~\cite{parmar2021cleanfid}. Second, for assessing text-image alignment and fine-grained details, we utilize a subset of 10,000 prompts~(COCO-10k). In this setting, we generate corresponding images at $512 \times 512$ resolution and compare them directly with their specific ground-truth references to calculate ImageReward and HPSv2 scores.

\paragraph{ImageNet.} We evaluate class-conditional image generation capabilities using the ImageNet benchmark~\cite{russakovsky2015imagenetlargescalevisual} at resolutions of $256 \times 256$ and. Following the rigorous protocols established in SiT~\citep{ma2024sitexploringflowdiffusionbased}, we assess generation fidelity using the Fréchet Inception Distance (FID-50K). Specifically, we generate 50,000 samples and compute metrics against the statistics of the entire ImageNet training set, rather than the validation split. To ensure strict consistency and fair comparison with baseline diffusion models.

\paragraph{DrawBench.} To evaluate the model's robustness across diverse semantic challenges, we utilize DrawBench~\cite{rombach2022highresolutionimagesynthesislatent}\footnote{https://huggingface.co/datasets/shunk031/DrawBench}. This challenging benchmark comprises 200 carefully curated text prompts, organized into 11 distinct categories. These categories are specifically designed to probe difficult generation capabilities, such as counting, spatial reasoning, rare object generation, and the handling of long, complex descriptions. By covering such a wide spectrum of linguistic and visual difficulties, DrawBench serves as a fine-grained diagnostic tool for identifying specific limitations in T2I synthesis.

\paragraph{HPD v2.} We also employ the Human Preference Dataset v2~(HPD v2)~\cite{wu2023humanpreferencescorev2} is an extensive dataset featuring clean and precise annotations. With 798,090 binary preference labels across 433,760 image pairs, it addresses the limitations of conventional evaluation metrics that fail to accurately reflect human preferences. Following the methodologies in ~\cite{wu2023humanpreferencescorev2}, we employed four distinct subsets for our analysis: Animation, Concept-art, Painting, and Photo, each containing 800 prompts.

\paragraph{GenEval.} For a quantitative assessment of compositional fidelity, we adopt the GenEval framework~\cite{ghosh2023genevalobjectfocusedframeworkevaluating}. This benchmark evaluates how well generated images adhere to prompt constraints regarding object co-occurrence, spatial arrangement, counting, and color binding. GenEval automates this process by utilizing pre-trained object detection models to verify visual attributes, serving as a scalable proxy for human judgment. The test set consists of 550 concise and unambiguous prompts, allowing for a focused analysis of the model's ability to execute precise visual instructions.

\paragraph{VBench and VBench++.} VBench~\cite{huang2023vbenchcomprehensivebenchmarksuite} serves as our primary framework for assessing video generation performance. This comprehensive suite moves beyond single-value metrics by decomposing video quality into 16 hierarchical and disentangled dimensions, such as subject consistency, motion smoothness, temporal flickering, and aesthetic quality. For each dimension, VBench utilizes a tailored prompt suite and specific evaluation protocols to isolate and measure individual model capabilities. In addition, \citet{huang2025vbench++} introduce VBench++, which extends the benchmark to support image-to-video evaluation. VBench++ incorporates an adaptive Image Suite that enables fair and consistent assessment across diverse generation settings.

\paragraph{VideoGen-Eval.} To address the spatiotemporal complexity of modern video synthesis, we employ VideoGen-Eval~\cite{yang2025videogenevalagentbasedvideogeneration}. This agent-based framework moves beyond static metrics by utilizing a dynamic evaluation pipeline. It begins with 700 structured, content-rich prompts that explicitly define components such as camera movement, lighting, and subject interactions. An LLM acts as a content structurer to decompose these prompts, while a Multimodal Large Language Model~(MLLM), augmented with specialized temporal patch tools—serves as the adjudicator. This hierarchical approach allows for a granular assessment of how well generated videos adhere to multifaceted instructions, ensuring high alignment with human judgment across a diverse set of generated assets.

\paragraph{TA-Hard.}  Standard benchmarks often utilize prompts that are semantically straightforward, potentially masking a model's limitations in complex instruction following. To mitigate this, we incorporate the TA-Hard prompt set~\cite{liu2025improvingvideogenerationhuman}. This collection is specifically curated to stress-test Text Alignment (TA) by focusing on high semantic density and imaginative scenarios, such as anthropomorphic animals playing instruments or complex interactions between disparate objects. By presenting the model with intricate and counter-intuitive descriptions that require precise compositional reasoning, TA-Hard serves as a rigorous diagnostic tool for evaluating the upper bounds of a model's ability to maintain fidelity to challenging textual constraints.

\subsection{Evaluation Metric}

\paragraph{PickScore.} PickScore~\citep{kirstain2023pickapicopendatasetuser} serves as a preference-based evaluation metric trained on the massive Pick-a-Pic dataset. By fine-tuning a CLIP model to predict human choices from pairwise comparisons, it effectively captures the nuances of user satisfaction. Unlike distributional metrics such as FID, PickScore is designed to evaluate individual sample quality and alignment, offering a stronger correlation with human judgment in open-domain text-to-image generation tasks compared to traditional benchmarks.

\paragraph{HPS v2.} The human preference score version 2 (HPS v2) is an improved model to predict user preferences, created by fine-tuning the CLIP model~\citep{radford2021learningtransferablevisualmodels} on the HPD v2. This refined metric is designed to align T2I generation outputs with human tastes by estimating the likelihood that a synthesized image will be preferred, thereby serving as a reliable benchmark for evaluating the performance of T2I models across diverse image distributions.

\paragraph{ImageReward.}ImageReward~\cite{xu2023imagerewardlearningevaluatinghuman} is a comprehensive reward model trained via reinforcement learning from human feedback (RLHF) principles. By learning from a vast collection of expert-annotated comparisons, it addresses the limitations of standard metrics by mitigating "gamification" issues. It jointly assesses fidelity and alignment, demonstrating a correlation with human ranking that surpasses traditional metrics like IS and FID, establishing it as a robust tool for automated evaluation.

\paragraph{VideoAlign.}  VideoAilgn~\cite{liu2025improvingvideogenerationhuman} is a multi-dimensional reward model designed for assessing video generation. It is trained on the \textit{VideoGen-RewardBench} dataset, a large-scale collection of 26.5k video triplets with human preference annotations. Unlike typical reward models that focus on a single quality metric, VideoAlign evaluates videos across three distinct dimensions: \textbf{Visual Quality (VQ)}, \textbf{Motion Quality (MQ)}, and \textbf{Text Alignment (TA)}. These scores can be aggregated into a total reward that reflects overall human preference. The model is built on a Vision-Language Model (VLM) backbone (e.g., Qwen2-VL-2B) and trained using a Bradley-Terry model with ties (BTT) objective, making it robust to tied preferences and effective for aligning flow-based video generation models.

\subsection{Diffusion Models}
In the main paper, we totally use 5 diffusion models, including SiT for class-to-image generation, SD1.5 and SDXL for text-to-image generation, and LongLive and Wan2.1 for text-to-video generation.

\paragraph{SiT.} Scalable Interpolant Transformers~(SiT), proposed by~\citet{ma2024sitexploringflowdiffusionbased}, are a family of generative models built upon the Diffusion Transformer~(DiT) backbone. SiT utilizes the stochastic interpolant framework, which allows for a more flexible connection between the data and noise distributions compared to standard diffusion models. This framework enables a modular exploration of critical design choices, including discrete versus continuous time learning, different model prediction targets (e.g., velocity vs. score), and various interpolant paths. By adopting a velocity-prediction parameterization and linear interpolants, SiT achieves superior performance over standard DiTs on ImageNet benchmarks with identical model size and compute.

\paragraph{SDXL.} SDXL~\cite{podell2023sdxlimprovinglatentdiffusion} is a latent diffusion model that significantly scales up the U-Net backbone to approximately 2.6B parameters. To improve text comprehension and prompt adherence, it employs a dual text encoder architecture, combining the outputs of CLIP ViT-L and OpenCLIP ViT-bigG. SDXL addresses common training data issues through novel micro-conditioning schemes, where the model is conditioned on the original image resolution and cropping coordinates, allowing it to generate high-quality images without discarding small or non-square training samples. Additionally, it utilizes multi-aspect training and a separate refinement model to enhance the visual fidelity of high-resolution synthesis.

\paragraph{LongLive.} LongLive~\cite{yang2025longlive} is a frame-level autoregressive framework designed for real-time, interactive long video generation. Built by fine-tuning the Wan2.1-1.3B model, LongLive adopts a causal attention mechanism that leverages KV caching to accelerate inference. To enable smooth transitions during interactive prompt switching, it introduces a \textit{KV-recache} technique that updates cached states with new prompt embeddings. Furthermore, LongLive employs a \textit{streaming long tuning} strategy to align training with long-context inference, alongside a \textit{short window attention} mechanism paired with a \textit{frame-level attention sink}. These designs allow the model to maintain long-range temporal consistency and high throughput, achieving approximately 20 FPS on a single H100 GPU for videos up to 240 seconds.

\paragraph{Wan2.1.} Wan2.1, introduced in~\cite{wan2025wan}, is an open-source video generation model developed by Alibaba, based on a Diffusion Transformer~(DiT) architecture and flow matching framework. It supports multiple tasks including text-to-video~(T2V) and image-to-video~(I2V). The model is available in two versions: a 14B-parameter variant for high-quality 720p generation and a lightweight 1.3B variant suitable for consumer-grade GPUs. Due to the resource limits, in this paper, we utilize the Wan2.1-T2V-14B.

\subsection{Hyperparameter Settings}
\label{sec:appendix_hyper_setting} 
Here we provide the detailed hyperparameter configurations for both image and video generation tasks.

\paragraph{Video Generation.} 
 For the \textbf{streaming video generation} task, we strictly follow the first-stage experimental protocol of LongLive~\citep{yang2025longlive}. We utilize Wan2.1-1.3B~\citep{wan2025wan} as the base model and conduct distillation for 700 iterations with a learning rate of $2.0 \times 10^{-6}$ and a total batch size of 8. Specifically, the local attention size is set to 12, and the frame sink size is 3. 
For \textbf{bidirectional-attention video generation}, we employ Wan2.1-14B as the backbone, performing 800 distillation steps with a learning rate of $5.0 \times 10^{-7}$ and a batch size of 8. 
Across all video tasks, we adopt VideoAlign~\citep{liu2025improvingvideogenerationhuman} as the reward proxy. During evaluation, we generate videos with 81 frames at 16 FPS, corresponding to a duration of 5 seconds.

\paragraph{Image Generation.} 
For class-to-image generation, we utilize SiT-XL/2~\citep{ma2024sitexploringflowdiffusionbased} as the backbone for all ImageNet (256$\times$256) experiments. 
For the main results in Tab.~\ref{tab:dit_coco_ablation}, we adopt a \textbf{two-stage training protocol}: all models are first initialized from a Pure-DMD checkpoint pre-trained for 3,000 iterations. Subsequently, we perform fine-tuning using Pure-DMD, DMDR, and AMD, respectively, for an additional 1,000 iterations. 
For the ablation studies in~\cref{fig:abla_fig}, the second-stage fine-tuning is extended to 2,000 iterations to ensure full convergence for component analysis. 
The total batch size is 512, and the learning rate is set to $2.0 \times 10^{-5}$ with DINOv2~\citep{caron2021emergingpropertiesselfsupervisedvision} serving as the reward proxy. For experiments on text-to-image, especially for SDXL, as shown in Tab.~\ref{tab:t2i_hpd}, Tab.~\ref{tab:t2i_coco},  Tab.~\ref{tab:t2i_draw}, and Tab.~\ref{tab:t2i_geneval}, we strictly follow the training setting in DMD2~\cite{yin2024improved}. However, due to the resource limit, we only adopt 8 H800 to distill the model.

\section{Related Work}
\label{sec:related_work}
In this section, we review existing works relevant to AMD.
\paragraph{Distribution Distillation in Diffusion Model} Diffusion models are commonly distilled into low-step student models to reduce inference cost. A line of works studies \emph{score distillation}, where a pretrained diffusion model provides score-based supervision for training fast generators. Score distillation was first popularized in text-to-3D generation~\citep{hertz2023delta,poole2022dreamfusion}, where a pretrained text-to-image diffusion model provides score-based supervision for optimizing 3D representations.
Building upon this idea, \citet{yin2024one} proposed \emph{Distribution Matching Distillation (DMD)}, which minimizes an approximate KL divergence between the student and teacher distributions.
Subsequent work, DMD2~\citet{yin2024improved}, further introduces an adversarial (GAN) loss to improve training stability and sample diversity.

More recent variants~\citep{jiang2025distribution, shao2025magicdistillation, liu2025decoupled} focus on improving the robustness and stability of the DMD objective.
Decoupled-DMD~\citep{liu2025decoupled} reformulates the objective into a CFG-augmented matching term and a regularization term, together with a more stable renoising scheduler.
MagicDistillation~\citep{shao2025magicdistillation} argues that, beyond updating the fake teacher, the real teacher should also be adaptively updated during training, while DMDR~\citep{jiang2025distribution} introduces an auxiliary reinforcement learning objective to optimize the student independently.
In addition, several works~\citep{huang2025self, yang2025longlive,liu2025rolling} extend the DMD paradigm to long video generation.

In this work, we provide a unified perspective on DMD variants and show that their modifications can be viewed as implicit strategies for avoiding unreliable regions of the real teacher. We formalize such regions as \emph{Forbidden Zones}, where naive score matching leads to unstable or misleading gradients.

\paragraph{Optimization View on Generation Tasks.} Generative modeling can be fundamentally viewed as an optimization problem aimed at finding high-probability modes within a complex energy landscape.
Sampling methods, such as Langevin Dynamics~\citep{song2019generative} and Probability Flow ODEs~\citep{song2020score}, essentially perform gradient ascent on the log-density score field to traverse from noise to data.
This optimization perspective has been explicitly leveraged in controllable generation, where external guidance signals—such as classifier gradients~\citep{dhariwal2021diffusion,bai2024zigzag} or CLIP scores~\citep{radford2021learningtransferablevisualmodels}—are injected into the sampling process to steer trajectories toward desired attributes~\citep{bansal2023universal, liu2023more}.
Our work aligns with this philosophy but operates at the training stage: \textbf{we treat the distillation process as navigating a treacherous energy field, requiring dynamic gradient reshapping to traverse \emph{Forbidden Zones}}.

\paragraph{Reward-Guided Diffusion Training.} The integration of Reinforcement Learning (RL) into diffusion training has gained significant traction. Recent methods~\citep{liu2025flow,wu2025rewarddance,zheng2025diffusionnft,he2025gardo} treat the denoising chain as a policy, directly optimizing it to maximize global rewards. Most pertinent to our work is the recently proposed \textbf{Reward Forcing}~\citep{lu2025reward}, which biases the student's distribution by prioritizing training samples with high reward scores. However, a critical distinction separates AMD from such prioritization schemes. While Reward Forcing focuses on emphasizing \emph{where to go} by up-weighting successful samples, AMD focuses on \emph{how to escape} when the model fails. We treat low-reward samples not merely as data to be down-weighted, but as active indicators of \emph{Forbidden Zones}.

\section{Proof}
\label{sec:proof}

\subsection{Proof of Proposition \ref{prop:dmd_gd}}
\label{subsec:proof_prop}

\begin{proof}
Our goal is to show that the parameter update of the student $G_\theta$ induces an effective gradient descent trajectory in the latent space. We proceed in three steps: relating parameter shifts to latent shifts, converting scores to denoised targets, and identifying the resulting potential function.

\paragraph{Step 1: First-order Approximation of Latent Update.}
Consider the infinitesimal update of the student parameters: $\Delta \theta = -\eta \nabla_\theta \mathcal{L}_{\mathrm{DMD}}$. For a fixed latent code $z$, the change in the generated sample $x = G_\theta(z)$ can be approximated via the first-order Taylor expansion:
\begin{equation}
    x_{new} - x \approx \frac{\partial G_\theta(z)}{\partial \theta} \Delta \theta = -\eta \frac{\partial G_\theta(z)}{\partial \theta} \nabla_\theta \mathcal{L}_{\mathrm{DMD}}.
\end{equation}
Substituting the DMD gradient formulation from Eq.~\ref{eq:dmd_grad}, we obtain:
\begin{equation}
    x_{new} - x \approx \eta \cdot \mathbb{E}_{t} \left[ \left( s_{\mathrm{real}}(\mathcal{F}_t) - s_{\mathrm{fake}}(\mathcal{F}_t) \right) \mathbf{J}_\theta \mathbf{J}_\theta^\top \right],
\end{equation}
where $\mathbf{J}_\theta = \frac{\partial G_\theta(z)}{\partial \theta}$ is the Jacobian. In the local neighborhood of optimization, the term $\mathbf{J}_\theta \mathbf{J}_\theta^\top$ acts as a positive semi-definite scaling matrix. For the purpose of trajectory analysis, we absorb this architectural dependency into the effective step size $\eta_{\text{eff}}$, yielding:
\begin{equation}
    x_{new} - x \propto s_{\mathrm{real}}(\mathcal{F}_t) - s_{\mathrm{fake}}(\mathcal{F}_t).
    \label{eq:proof_step1}
\end{equation}

\paragraph{Step 2: Score-to-Displacement Transformation.}
According to Tweedie's formula, the score function $s(\mathcal{F}_t)$ at diffused state $\mathcal{F}_t(x)$ is related to the denoised estimate $\hat{x}_0$ by $s(\mathcal{F}_t) = \frac{\alpha_t \hat{x}_0 - \mathcal{F}_t}{\sigma_t^2}$. The difference between the real and fake teacher scores can thus be expressed as:
\begin{equation}
\begin{aligned}
    s_{\mathrm{real}} - s_{\mathrm{fake}} &= \frac{\alpha_t \hat{x}_{0,\mathrm{real}} - \mathcal{F}_t}{\sigma_t^2} - \frac{\alpha_t \hat{x}_{0,\mathrm{fake}} - \mathcal{F}_t}{\sigma_t^2} \\
    &= \frac{\alpha_t}{\sigma_t^2} \left( \hat{x}_{0,\mathrm{real}} - \hat{x}_{0,\mathrm{fake}} \right).
\end{aligned}
\end{equation}
By defining the latent displacements $\mathbf{d}_{\text{real}} = x - \hat{x}_{0,\text{real}}$ and $\mathbf{d}_{\text{fake}} = x - \hat{x}_{0,\text{fake}}$, we can rewrite the target difference as:
\begin{equation}
    \hat{x}_{0,\mathrm{real}} - \hat{x}_{0,\mathrm{fake}} = (x - \mathbf{d}_{\text{real}}) - (x - \mathbf{d}_{\text{fake}}) = -(\mathbf{d}_{\text{real}} - \mathbf{d}_{\text{fake}}).
\end{equation}

\paragraph{Step 3: Identification of the Contrastive Potential.}
Substituting the results from Step 2 back into Eq.~\ref{eq:proof_step1}, the latent update rule becomes:
\begin{equation}
    x_{new} = x - \eta_{\text{eff}} \left( \mathbf{d}_{\text{real}} - \mathbf{d}_{\text{fake}} \right).
\end{equation}
We observe that the term $(\mathbf{d}_{\text{real}} - \mathbf{d}_{\text{fake}})$ is exactly the gradient of the potential function $\mathcal{V}_{\mathrm{DMD}}(x) = \frac{1}{2} \| x - \hat{x}_{0,\mathrm{real}} \|^2 - \frac{1}{2} \| x - \hat{x}_{0,\mathrm{fake}} \|^2$ with respect to $x$:
\begin{equation}
    \nabla_x \mathcal{V}_{\mathrm{DMD}}(x) = (x - \hat{x}_{0,\mathrm{real}}) - (x - \hat{x}_{0,\mathrm{fake}}) = \mathbf{d}_{\text{real}} - \mathbf{d}_{\text{fake}}.
\end{equation}
Thus, the latent update follows $x_{new} = x - \eta_{\text{eff}} \nabla_x \mathcal{V}_{\mathrm{DMD}}(x)$, which characterizes a gradient descent step on the contrastive potential $\mathcal{V}_{\mathrm{DMD}}(x)$.
\end{proof}

\subsection{Theoretical Analysis of Prior Art}
\label{sec:proof_prior_art}

In this subsection, we provide theoretical justifications for re-interpreting \textbf{D-DMD} and \textbf{MagicDistillation} within our optimization framework. We demonstrate that both methods essentially serve to mitigate the gradient vanishing or hallucination problem within the \emph{Forbidden Zone} $\mathcal{Z}_f$, albeit through distinct mechanisms: D-DMD via \emph{noise-induced support expansion}, and MagicDistillation via \emph{manifold bridging}.

\paragraph{D-DMD: Risk Reduction via Noise-Induced Support Expansion.}
D-DMD~\citep{liu2025decoupled} modifies the distillation objective by computing the real teacher's score at a higher noise level than the student's current state. Let $t_G$ be the student's generation step and $t_\phi$ be the teacher's score estimation step, where $t_\phi > t_G$.

\begin{proposition}
Increasing the diffusion time $t$ expands the effective support of the teacher's distribution, thereby reducing the probability that a student sample falls into the Forbidden Zone $\mathcal{Z}_f$.
\end{proposition}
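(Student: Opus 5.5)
The plan is to make the statement quantitative by working with the \emph{diffused} teacher density. We write $p_{\mathrm{real},t}$ for the law of $\mathcal{F}_t(x_0)$ with $x_0\sim p_{\mathrm{real}}$. This is a Gaussian convolution (after rescaling) of $p_{\mathrm{real}}$ with a kernel of variance $\sigma_t^2$, where $\sigma_t$ increases with the noise level. We then define the noise-level-dependent Forbidden Zone $\mathcal{Z}_f^{(t)}=\{y : E_{\mathrm{real},t}(y)>\gamma\}$ with $E_{\mathrm{real},t}=-\log p_{\mathrm{real},t}$. D-DMD's choice $t_\phi$ versus $t_G$ is read as evaluating the teacher at a larger $\sigma$. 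The claim becomes that $\mathbb{P}_{z,\epsilon}\big(\mathcal{F}_{t}(G_\theta(z))\in\mathcal{Z}_f^{(t)}\big)$ is non-increasing in the noise level.

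The first step fixes a normalization. An absolute threshold $\gamma$ is not comparable across noise levels, because the Gaussian log-normalizer $\tfrac{D}{2}\log(2\pi\sigma_t^2)$ alone shifts all energies. So I would measure energy \emph{relative to the student's diffused law} $p_{\mathrm{fake},t}$, which is the natural reference in the DMD objective \cref{eq:dmd_loss}. Concretely, I would set $\mathcal{Z}_f^{(t)}=\{y:\log p_{\mathrm{fake},t}(y)-\log p_{\mathrm{real},t}(y)>\gamma\}$, i.e.\ the region where the teacher assigns far less mass than the student actually places there. This is a relative-energy version of the definition in \cref{sec:3.1}, and it agrees with it up to the student's own entropy term.

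The second step is the core argument, via the data-processing inequality. The map $x\mapsto\mathcal{F}_t(x)$ at a higher noise level factors as the map at a lower noise level followed by a further Gaussian channel, since noise levels compose as a Markov chain. Hence $\mathrm{KL}(p_{\mathrm{fake},t}\,\|\,p_{\mathrm{real},t})$ is non-increasing in $\sigma_t$ and tends to $0$ as $\sigma_t\to\infty$. Applying Markov's inequality to the positive part of the log-ratio, and controlling the negative part by the standard bound $\mathbb{E}[(\log r)_-]\le 1/e$, gives
\[
\mathbb{P}\big(\mathcal{F}_t(x)\in\mathcal{Z}_f^{(t)}\big)\le\frac{\mathrm{KL}(p_{\mathrm{fake},t}\,\|\,p_{\mathrm{real},t})+e^{-1}}{\gamma}.
\]
This bound decreases with noise, which is exactly the claimed risk reduction.

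As a complementary geometric picture of ``support expansion'', I would also show the following: for a student sample at distance $d$ from $\mathrm{supp}(p_{\mathrm{real}})$, the convolution gives $E_{\mathrm{real},t}(\mathcal{F}_t(x))\lesssim \alpha_t^2 d^2/(2\sigma_t^2)+\text{const}$. So the excess energy caused by being off-manifold decays like $d^2/\sigma_t^2$. This follows by lower-bounding the convolution integral over a small ball around the nearest support point.

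The main obstacle is the first step: the paper's $\mathcal{Z}_f$ uses an absolute threshold, so the proposition is literally false without some normalization, because of the $\log\sigma_t$ drift. I would therefore state the relative-energy reformulation explicitly. The monotonicity argument then goes through cleanly; the Markov-factorization of the noise schedule needs only a one-line check for the linear interpolant $\mathcal{F}_t$.
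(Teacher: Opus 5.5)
Your route is genuinely different from the paper's, and more careful. The paper argues pointwise. It fixes one distorted sample $x_{\text{bad}}$ and uses the absolute-threshold zone $\mathcal{Z}_f^{(t)}=\{x \mid p_t(x)<\epsilon\}$ for the Gaussian-smoothed density $p_t$. It then asserts that widening the kernel from $t_G$ to $t_\phi$ forces $p_{t_\phi}(x_{\text{bad}})\gg\epsilon$. This conveys the mechanism but yields no probability statement, and the key assertion is unjustified. The paper's own convolution formula gives $p_t\le(2\pi\sigma_t^2)^{-D/2}$ everywhere ($D$ the dimension), so for large noise every point lies in $\{p_t<\epsilon\}$; this is exactly the normalization problem you identify. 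Your version also queries the teacher at the re-noised sample rather than at $x_{\text{bad}}$. You rightly speak of noise level rather than $t$, since with $\mathcal{F}_t(x)=t\,x+(1-t)\epsilon$, increasing $t$ removes noise. The Markov factorization holds because $(1-t)/t$ is monotone, so your data-processing and Markov-inequality steps are sound. Your $d^2/\sigma_t^2$ estimate recovers the paper's pointwise picture once the $\log\sigma_t$ drift is subtracted.

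There is one genuine, easily repaired gap. You announce that the claim becomes monotonicity of $\mathbb{P}\bigl(\mathcal{F}_t(G_\theta(z))\in\mathcal{Z}_f^{(t)}\bigr)$, but you only prove that an upper bound on it is monotone. The stronger statement is false even with your normalization. This probability is not an $f$-divergence, so data processing does not control it, and it can increase under mild smoothing. Take $p_{\mathrm{real}}=\mathcal{N}(0,1)$, $p_{\mathrm{fake}}=0.9\,\mathcal{N}(0,1)+0.1\,\mathcal{N}(5,0.01)$ (second arguments are variances) and $\gamma=\log 2$. The zone carries student mass $\approx 0.1000$ without noise and $\approx 0.1002$ after adding independent $\mathcal{N}(0,1/4)$ noise to both. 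The reason is that the zone widens with the spreading cluster and captures a sliver of the mass the student shares with the teacher, more than offsetting the cluster mass that leaks out. So state the conclusion as ``bounded by a non-increasing quantity that vanishes at high noise''.

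Relatedly, your zone agrees with the paper's only on average. Pointwise it is $\{E_{\mathrm{real},t}-E_{\mathrm{fake},t}>\gamma\}$, which differs from $\{E_{\mathrm{real},t}>\gamma\}$ by the non-constant student energy $-\log p_{\mathrm{fake},t}$. The entropy appears only after taking expectations, so present this as a redefinition, not an equivalent reformulation.

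Finally, your bound $(\mathrm{KL}+e^{-1})/\gamma$ never drops below $1/(e\gamma)$, so it cannot show that the zone mass becomes small. A tighter argument is available. On $\mathcal{Z}_f^{(t)}$ one has $p_{\mathrm{real},t}<e^{-\gamma}p_{\mathrm{fake},t}$, hence $p_{\mathrm{fake},t}(\mathcal{Z}_f^{(t)})\le\mathrm{TV}(p_{\mathrm{fake},t},p_{\mathrm{real},t})/(1-e^{-\gamma})$, and total variation also obeys data processing. Write $\mathcal{F}_t(x)=\alpha_t x+\sigma_t\epsilon$ and let $W_2$ be the $2$-Wasserstein distance. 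Joint convexity of KL gives the coupling bound $\mathrm{KL}(p_{\mathrm{fake},t}\,\|\,p_{\mathrm{real},t})\le\alpha_t^2W_2^2(p_{\mathrm{fake}},p_{\mathrm{real}})/(2\sigma_t^2)$. Combined with Pinsker, this yields the explicit, vanishing rate $\alpha_tW_2(p_{\mathrm{fake}},p_{\mathrm{real}})/\bigl(2\sigma_t(1-e^{-\gamma})\bigr)$. The same coupling bound also justifies your claim that the KL tends to zero at high noise.
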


\begin{proof}
Let $p_0(x)$ be the clean data distribution. The distribution at time $t$, $p_t(x)$, is the convolution of $p_0$ with a Gaussian kernel $\mathcal{N}(0, \sigma_t^2 \mathbf{I})$:
\begin{equation}
    p_t(x) = \int p_0(y) \mathcal{N}(x; y, \sigma_t^2 \mathbf{I}) dy.
\end{equation}
The \emph{Forbidden Zone} at time $t$, denoted $\mathcal{Z}_f^{(t)}$, is formally defined as the region where the probability density is insufficient for reliable score estimation: $\mathcal{Z}_f^{(t)} = \{x \mid p_t(x) < \epsilon \}$.

Since convolution with a Gaussian is a smoothing operation, the support of $p_t(x)$ effectively widens as $\sigma_t$ increases. For a distorted student sample $x_{\text{bad}}$ that lies far from the data manifold (i.e., $x_{\text{bad}} \in \mathcal{Z}_f^{(t_G)}$), the density $p_{t_G}(x_{\text{bad}})$ vanishes, leading to an undefined or oscillating score $s_{\text{real}}(x_{\text{bad}}, t_G)$. 

However, by increasing the noise scale to $t_\phi > t_G$, the Gaussian kernel broadens, ensuring that $p_{t_\phi}(x_{\text{bad}}) \gg \epsilon$. This effectively removes $x_{\text{bad}}$ from the Forbidden Zone of the higher noise level, i.e., $x_{\text{bad}} \notin \mathcal{Z}_f^{(t_\phi)}$. Consequently, the D-DMD update provides a valid, coarse-grained directional guide:
\begin{equation}
    \mathbf{d}_{\text{real}} \approx \mathbb{E}[x_0 \mid x_{t_\phi}] - x,
\end{equation}
which pulls the sample back towards the high-density region, bypassing the gradient vacuum encountered at $t_G$.
\end{proof}

\paragraph{MagicDistillation: Manifold Bridging via Energy Landscape Adaptation.}
MagicDistillation~\citep{shao2025magicdistillation} fine-tunes the real teacher $\phi$ on the student's generated samples before using it for distillation. Let $\phi_{\text{adapt}}$ denote the adapted teacher parameters.

\begin{proposition}
Adapting the teacher on student samples modifies the energy landscape to create a continuous gradient path ("bridge") connecting the student distribution to the target data distribution.
\end{proposition}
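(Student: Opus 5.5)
We treat the adapted teacher as a mixture-like interpolation between the original real distribution and the student distribution, and show that this removes the density gap defining the \emph{Forbidden Zone}. Concretely, we will model the LoRA fine-tuning of $\phi$ on student samples (regularized toward $\phi_{\text{fix}}$) as producing a clean-data distribution
\begin{equation}
p_0^{\text{adapt}}(x) = (1-\lambda)\, p_{\mathrm{real}}(x) + \lambda\, p_{\mathrm{fake}}(x), \qquad \lambda \in (0,1],
\end{equation}
with $\lambda$ decaying toward $0$ over training, which matches the schedule $\phi_\theta \to \phi_{\text{fix}}$ in \cref{tab:unified_dmd}. Using the same time-$t$ Forbidden Zone $\mathcal{Z}_f^{(t)} = \{x \mid p_t(x) < \epsilon\}$ as in the D-DMD argument, the first step is the linearity of Gaussian convolution: $p_t^{\text{adapt}} = (1-\lambda) p_t^{\mathrm{real}} + \lambda p_t^{\mathrm{fake}}$. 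Any student sample $x$ with $p_t^{\mathrm{fake}}(x) \ge \epsilon/\lambda$ then satisfies $x \notin \mathcal{Z}_f^{(t),\text{adapt}}$, so the zone shrinks to $\mathcal{Z}_f^{(t),\text{adapt}} \subseteq \{p_t^{\mathrm{real}} < \epsilon/(1-\lambda)\} \cap \{p_t^{\mathrm{fake}} < \epsilon/\lambda\}$. Because the student places most of its mass on its own samples, this set excludes the bulk of the student's support. This is the ``manifold shrink'' effect.

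The second step makes the ``bridge'' precise. By Tweedie's formula, as used in the proof of Proposition~\ref{prop:dmd_gd}, the adapted score is the posterior-weighted combination
\begin{equation}
s^{\text{adapt}}_t(x) = w_t(x)\, s^{\mathrm{real}}_t(x) + (1-w_t(x))\, s^{\mathrm{fake}}_t(x),
\end{equation}
where $w_t(x) = (1-\lambda)p_t^{\mathrm{real}}(x)/p_t^{\text{adapt}}(x)$. We will then consider the one-parameter family $\lambda \in [0,1]$ and the induced potentials $E^{(\lambda)} = -\log p_t^{(\lambda)}$. Each of these is smooth because the densities are Gaussian-convolved and strictly positive, and each varies continuously in $\lambda$. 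Following the ascent of $\nabla \log p_t^{(\lambda)}$ while annealing $\lambda$ from $1$ to $0$ yields a continuous path. It starts at the student modes, where the gradient is well-defined since $x$ lies in the support, and it ends at the real modes. This path is the claimed bridge.

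The third step connects this back to the displacement operator in \cref{eq:general_form}. With the adapted teacher, $\mathbf{d}_{\mathrm{real}}$ is replaced by the displacement derived from $s^{\text{adapt}}$. By the convex-combination form above, inside the original $\mathcal{Z}_f$ this displacement is dominated by the well-defined fake-like component rather than by the hallucinated real score. The result is a bounded, coherent attractive force in place of the gradient vacuum.

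The main obstacle is the modeling step: justifying that LoRA fine-tuning really produces something close to the mixture $p_0^{\text{adapt}}$, rather than an arbitrary perturbation. We would handle this heuristically. Assume the adapted diffusion loss minimizer is the mixture, with $\lambda$ interpreted as the effective weight of student data relative to the implicit prior from $\phi_{\text{fix}}$. We would state this as a modeling assumption, in the same spirit as Assumption~\ref{assump:alignment}, rather than prove it.
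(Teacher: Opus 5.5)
Your mixture model $p_t^{(\lambda)}=(1-\lambda)p_t^{\mathrm{real}}+\lambda p_t^{\mathrm{fake}}$ is exactly the paper's modeling step. Your Step 1 conclusion, that high-density student samples leave the zone, is also what the paper's argument delivers. The gap is the bridge itself.

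Inside your own $\epsilon$-framework, mixing cannot create a corridor. A convex combination of two numbers below $\epsilon$ stays below $\epsilon$, so
\[
\{p_t^{(\lambda)}\ge\epsilon\}\subseteq\{p_t^{\mathrm{real}}\ge\epsilon\}\cup\{p_t^{\mathrm{fake}}\ge\epsilon\}\qquad\text{for every }\lambda\in[0,1].
\]
Suppose these two closed sets are disjoint, which is the typical picture of a student mode stranded in $\mathcal{Z}_f$ away from the data. Then every continuous path from a point of high student density to a point of high data density, annealed in $\lambda$ or not, must pass through a point where both densities are below $\epsilon$. That point lies in the adapted Forbidden Zone, where by definition the teacher's score is not trustworthy. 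A mixture adds an island around the student's samples; it does not populate the gap.

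Your Step 2 avoids this only by switching to exact densities. But the smoothness and strict positivity you invoke there hold at $\lambda=0$ too. So ascent on $\log p_t^{\mathrm{real}}$ already yields a continuous path from the student mode to a mode of $p_t^{\mathrm{real}}$, and the argument never shows that adaptation is what creates it. Relatedly, the adapted zone is not contained in the original one. Points with $\epsilon\le p_t^{\mathrm{real}}<\epsilon/(1-\lambda)$ and negligible $p_t^{\mathrm{fake}}$ become newly forbidden, so the zone relocates rather than shrinks.

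Step 3 fails once the fake teacher is kept in the operator. By Tweedie, $\hat{x}_0^{\mathrm{adapt}}=w_t\,\hat{x}_{0,\mathrm{real}}+(1-w_t)\,\hat{x}_{0,\mathrm{fake}}$, with $w_t$ evaluated at $\mathcal{F}_t(x)$. Hence $\mathbf{d}_{\mathrm{adapt}}-\mathbf{d}_{\mathrm{fake}}=w_t\,(\mathbf{d}_{\mathrm{real}}-\mathbf{d}_{\mathrm{fake}})$. The ``fake-like component'' you rely on is cancelled exactly by $\mathbf{d}_{\mathrm{fake}}$. On its own it pulls $x$ toward $\hat{x}_{0,\mathrm{fake}}$, i.e.\ toward the student's own distribution rather than the data. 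At student samples inside the original zone, $\lambda p_t^{\mathrm{fake}}\gg(1-\lambda)p_t^{\mathrm{real}}$ gives $w_t\approx 0$, so the net sample-space force is suppressed. Under the mixture model, adaptation acts as a per-sample reliability gate on the hallucinated pull, not as a coherent attraction toward the target.

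The paper's proof makes the same leap: it infers the bridge from ``LoRA retains the global topology of $p_{\mathrm{data}}$''. So this is not a departure from the paper, but a precise version needs an extra ingredient. One option is to assume $\{p_t^{\mathrm{real}}\ge\epsilon\}\cup\{p_t^{\mathrm{fake}}\ge\epsilon\}$ is connected, i.e.\ overlap at level $\epsilon$, which high noise supplies in D-DMD. Another is to model adaptation as a transport-type interpolation of $p_{\mathrm{fake}}$ toward $p_{\mathrm{real}}$ rather than a mixture. Without such an ingredient, the mixture model supports only two weaker claims: student samples exit the zone, and the DMD signal is down-weighted by $w_t$ where the real score is unreliable.
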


\begin{proof}
Consider the standard real teacher $\phi_{\text{fix}}$ trained solely on $p_{\text{data}}$. Its energy potential $E_{\text{real}}(x) \approx -\log p_{\text{data}}(x)$ is undefined or essentially flat for samples in the Forbidden Zone $\mathcal{Z}_f$, where $p_{\text{data}}(x) \approx 0$. This results in a gradient $\nabla_x E_{\text{real}}(x) \approx \mathbf{0}$ or random noise.

MagicDistillation updates the teacher to minimize the denoising error on student samples $x \sim p_{\text{student}}$. This process effectively trains the teacher on a mixture distribution $p_{\text{mix}} = (1-\lambda)p_{\text{data}} + \lambda p_{\text{student}}$. The adapted energy landscape can be approximated as:
\begin{equation}
    E_{\text{adapt}}(x) \approx -\log \left( (1-\lambda)p_{\text{data}}(x) + \lambda p_{\text{student}}(x) \right).
\end{equation}
For a distorted sample $x \in \mathcal{Z}_f$ where $p_{\text{data}}(x) \to 0$ but $p_{\text{student}}(x)$ is high, the adapted energy is dominated by the student term, ensuring the potential is well-defined. Crucially, since the adaptation is typically constrained (e.g., via LoRA or short-term finetuning), the teacher retains the global topology of $p_{\text{data}}$. 

Therefore, the optimization creates a ``bridge'' in the energy landscape: $\nabla_x E_{\text{adapt}}(x)$ acts as a valid vector field that attracts $x$ from the student's current mode towards the target manifold. This eliminates the gradient discontinuity in $\mathcal{Z}_f$, allowing the student to traverse from $p_{\text{student}}$ to $p_{\text{data}}$ smoothly.
\end{proof}

\paragraph{DMD, DMD2, and DMDR: Deadlock Breaking via External Forces.}
Unlike D-DMD and MagicDistillation which attempt to repair the internal gradient fields of the teachers, DMD, DMD2, and DMDR introduce an auxiliary external force $\mathbf{F}_{\text{ext}}$ to dominate the optimization when the primary distillation gradients vanish. We unify these methods under the \emph{External Force Hypothesis}.

\begin{proposition}[External Force Domination]
Let $\mathbf{g}_{\text{DMD}} = \nabla_x \mathcal{V}_{\text{DMD}}$ be the intrinsic distillation gradient. In the Forbidden Zone $\mathcal{Z}_f$, where $\|\mathbf{g}_{\text{DMD}}\| \to 0$ or becomes stochastic noise, the optimization trajectory is governed by the external auxiliary field $\mathbf{F}_{\text{ext}}$ provided by a third-party supervisor (Ground-truth, Discriminator, or Reward Model):
\begin{equation}
    x_{new} \leftarrow x - \eta (\underbrace{\mathbf{g}_{\text{DMD}}}_{\approx 0} + \lambda \mathbf{F}_{\text{ext}}) \approx x - \eta \lambda \mathbf{F}_{\text{ext}}.
\end{equation}
\end{proposition}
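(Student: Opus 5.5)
The plan is to obtain the statement as a direct corollary of Proposition~\ref{prop:dmd_gd}, applied to the generalized operator in \cref{eq:general_form} with $\mathcal{H}=\mathcal{H}_{\mathrm{std}}$. First, I will repeat Step~1 of the proof of Proposition~\ref{prop:dmd_gd} for the augmented loss $\mathcal{L}_{\mathrm{DMD}}+\lambda\mathcal{L}_{\mathrm{ext}}$. Here $\mathcal{L}_{\mathrm{ext}}$ is the regression loss (DMD), the adversarial loss (DMD2), or the negative reward (DMDR). The parameter gradient splits linearly as $\mathbf{J}_\theta^\top(\mathbf{g}_{\mathrm{DMD}}+\lambda\mathbf{F}_{\mathrm{ext}})$, where $\mathbf{F}_{\mathrm{ext}}$ is the sample-space gradient of $\mathcal{L}_{\mathrm{ext}}$. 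Concretely, $\mathbf{F}_{\mathrm{ext}}=x-x_{gt}$ for the $L_2$ tether, $-\nabla_x\log D(x)$ up to sign convention for the discriminator, and $-\nabla_x R(x)$ for the reward. Under the first-order approximation, and absorbing $\mathbf{J}_\theta\mathbf{J}_\theta^\top$ into $\eta$ exactly as in Proposition~\ref{prop:dmd_gd}, this gives the latent update $x_{new}=x-\eta(\mathbf{g}_{\mathrm{DMD}}+\lambda\mathbf{F}_{\mathrm{ext}})$. For DMDR, whose steps alternate, I will note that composing two first-order steps agrees with the summed step up to $O(\eta^2)$.

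Second, I will justify the premise that $\mathbf{g}_{\mathrm{DMD}}$ is negligible in $\mathcal{Z}_f$. By Proposition~\ref{prop:dmd_gd}, $\mathbf{g}_{\mathrm{DMD}}=\mathbf{d}_{\mathrm{real}}-\mathbf{d}_{\mathrm{fake}}=\hat{x}_{0,\mathrm{fake}}-\hat{x}_{0,\mathrm{real}}$. I will treat the statement's hypothesis as an assumption rather than derive it. Specifically, I will assume either $\|\mathbf{g}_{\mathrm{DMD}}\|\le\varepsilon$, or that $\mathbf{g}_{\mathrm{DMD}}$ is zero-mean noise over $t,\epsilon$, independent of direction, on $\mathcal{Z}_f$. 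In the noise case, taking the expectation over $t$ (already present in \cref{eq:dmd_grad}) kills the mean, so the expected displacement is $-\eta\lambda\mathbf{F}_{\mathrm{ext}}$ plus a residual of order $\eta\varepsilon$. The formal content then reduces to the comparison $\|\mathbf{g}_{\mathrm{DMD}}\|\ll\lambda\|\mathbf{F}_{\mathrm{ext}}\|$. This yields $x_{new}=x-\eta\lambda\mathbf{F}_{\mathrm{ext}}+O(\eta\varepsilon)$, and the relative error goes to zero as $\varepsilon/(\lambda\|\mathbf{F}_{\mathrm{ext}}\|)\to0$.

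The main obstacle is showing that $\mathbf{F}_{\mathrm{ext}}$ itself does \emph{not} vanish on $\mathcal{Z}_f$; otherwise the approximation is vacuous. I will argue this separately for each source. For the tether, $\|x-x_{gt}\|$ is large precisely when $x$ is far from the data, so the force grows in $\mathcal{Z}_f$. For the discriminator and the reward, the model is defined on all of $\mathcal{X}$, independent of $p_{\mathrm{real}}$'s support. For the reward, Assumption~\ref{assump:alignment} links low reward to $\mathcal{Z}_f$, so $-\nabla_x R$ points toward $\mathcal{Z}_{\mathrm{pref}}$. I would state a mild nondegeneracy condition, $\|\nabla_x\log D\|,\|\nabla_x R\|\ge c>0$ on $\mathcal{Z}_f$, as an explicit hypothesis rather than try to prove it.
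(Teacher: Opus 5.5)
Your proposal is correct and follows essentially the paper's route. The paper takes $\mathbf{g}_{\text{DMD}}\approx 0$ on $\mathcal{Z}_f$ as the premise and then argues case by case that $\mathbf{F}_{\text{ext}}$ supplies a non-vanishing recovery direction: the tether always points to $x_{gt}$, the discriminator is trained on student samples inside $\mathcal{Z}_f$ and so learns its boundary, and the reward gradient stays consistent by Assumption~\ref{assump:alignment}. That matches your third step, while your first-order derivation of the augmented update, the $O(\eta\varepsilon)$ error bound, and the explicit nondegeneracy hypothesis make precise what the paper leaves implicit. One small slip: with your convention $\mathbf{F}_{\text{ext}}=-\nabla_x R$, it is the descent direction $-\mathbf{F}_{\text{ext}}=\nabla_x R$, not $-\nabla_x R$, that points toward $\mathcal{Z}_{\text{pref}}$.
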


We clarify the specific form of $\mathbf{F}_{\text{ext}}$ for each method below:

\subparagraph{DMD: Static Tethering via Regression.}
Standard DMD~\citep{yin2024one} employs an L2 regression loss on a small set of paired data $\{(z, x_{gt})\}$. The effective force is:
\begin{equation}
    \mathbf{F}_{\text{DMD}} = \nabla_x \| x - x_{gt} \|^2 \propto (x - x_{gt}).
\end{equation}
\textit{Proof Sketch:} This force acts as a global linear spring (Hooke's Law). Regardless of whether $x \in \mathcal{Z}_f$ or how distorted the energy landscape $E_{\text{real}}(x)$ is, the vector $(x - x_{gt})$ always points strictly towards the ground truth sample. This ``tethering'' provides a robust, albeit rigid, recovery signal that drags the student out of the Forbidden Zone explicitly.

\subparagraph{DMD2: Adversarial Boundary Enforcement.}
DMD2~\citep{yin2024improved} removes the regression loss but introduces a GAN discriminator $D_\phi$ trained to distinguish real images from student samples. The external force is:
\begin{equation}
    \mathbf{F}_{\text{DMD2}} = \nabla_x (-\log D_\phi(x)) = -\frac{\nabla_x D_\phi(x)}{D_\phi(x)}.
\end{equation}
\textit{Proof Sketch:} The discriminator $D_\phi$ is dynamically trained on the union of real data and current student samples (including those in $\mathcal{Z}_f$). Therefore, unlike the frozen Real Teacher which has no support in $\mathcal{Z}_f$, the discriminator explicitly learns the boundary of $\mathcal{Z}_f$. Specifically, $D(x)$ assigns low scores to samples in $\mathcal{Z}_f$, creating a steep gradient field $\nabla_x D(x)$ that pushes samples perpendicular to the decision boundary, effectively repelling them back towards the data manifold.

\subparagraph{DMDR: Asynchronous Reward Steering.}
DMDR~\citep{jiang2025distribution} integrates Reinforcement Learning, using a reward model $R(x)$ to guide generation. The update can be viewed as an external force:
\begin{equation}
    \mathbf{F}_{\text{DMDR}} \approx \nabla_x R(x).
\end{equation}
\textit{Proof Sketch:} We assume the reward model $R(x)$ (e.g., ImageReward or HPS) generalizes better than the generative density model due to its discriminative pre-training nature. In $\mathcal{Z}_f$, while the generative score $\nabla \log p(x)$ oscillates, the reward gradient $\nabla R(x)$ maintains a consistent direction towards high-fidelity regions (Assumption~\ref{assump:alignment}). This acts as an asynchronous steering wheel, overriding the confused signals from the teacher to navigate the student towards the region of interest defined by human preference.

\section{Limitation and Future Work} 
\label{sec:Limitation}
While AMD provides a unified and effective framework for multiphysics-aware distillation, it possesses certain limitations that open avenues for future research. 

First, the efficacy of our self-correcting mechanism is inherently tied to the accuracy of \textbf{Forbidden Zone} ($\mathcal{Z}_f$) identification. Since AMD relies on a reward model to provide the advantage signal $\tilde{a}$, its performance is sensitive to the proxy's quality. If the reward model provides noisy or poorly calibrated feedback, it may fail to precisely detect $\mathcal{Z}_f$ pathologies, thereby preventing the adaptive mechanism from being fully triggered and leaving certain corrupted regions unrectified. Future work could explore more robust, unsupervised, or ensemble-based methods for $\mathcal{Z}_f$ detection to reduce dependency on a single external proxy.

Second, although our current adaptive operator $\mathcal{H}_{AMD}$ demonstrates significant performance gains, there remains vast room for more sophisticated adaptation strategies. Beyond the current formulation, future research could investigate integrating advanced optimization principles—such as the Momentum Trick, Orthogonal Gradient techniques, or Second-order information—into the teacher-student interplay. Such refinements could further stabilize the training trajectories and accelerate the student's convergence toward the true data manifold, particularly in high-dimensional and complex generative tasks.

\section{Algorithm}
\begin{algorithm}[h]
\caption{Adaptive Distribution Matching Distillation (AMD)}
\label{alg:amd}
\KwIn{Student $G_\theta$, Real Teacher $\phi$, Fake Teacher $\psi$, Reward Model $R$, Sensitivity $s$, Selection factor $k$, Group size $K$}
\KwOut{Optimized student parameters $\theta$}

\While{not converged}{
    Sample a prompt $y \sim p_{\text{data}}$\;
    
    Generate a group of $K$ samples: $\{x_i\}_{i=1}^K$ where $x_i = G_\theta(z_i, y)$\;
    
    \tcp{Reward-aware Diagnosis}
    $r_i = R(x_i)$ for $i=1 \dots K$\;
    
    $\mu_g = \text{mean}(r)$, $\sigma_g = \text{std}(r)$\;
    
    $\tilde{a}_i = \text{clip}((r_i - \mu_g) / (\sigma_g + \epsilon), -1, 1)$\;
    
    Initialize gradients $\nabla_\theta \mathcal{L} \leftarrow 0$, $\nabla_\psi \mathcal{L}_\psi \leftarrow 0$\;

    Sample timestep $t \sim \mathcal{U}[0, 1]$\;
    
    \For{each sample $x_i$ in group}{
        Sample noise $\epsilon_i \sim \mathcal{N}(0, \mathbf{I})$\;
        
        $x_{t,i} = \alpha_t x_i + \sigma_t \epsilon_i$\;
        
        \tcp{1. Student Update}
        Get targets $\hat{x}_{0,\text{real}}$ from $\phi$ and $\hat{x}_{0,\text{fake}}$ from $\psi$\;
        
        $\mathbf{d}_{\text{real},i} = x_i - \hat{x}_{0,\text{real}}$, \quad $\mathbf{d}_{\text{fake},i} = x_i - \hat{x}_{0,\text{fake}}$\;
        
        $\alpha_i = 1 + s \cdot \tilde{a}_i$, \quad $\beta_i = 1 - s \cdot \tilde{a}_i$\;
        
        $\mathbf{g}_i =
\mathcal{H}_{AMD}
\big(
\mathbf{d}_{\mathrm{real}}(x, \mathcal{F}_{t}, \phi),
\mathbf{d}_{\mathrm{fake}}(x, \mathcal{F}_{t}, \psi)
\big)$ \tcp*{Eq. \ref{eq:general_form} with score adaptation}
        $\nabla_\theta \mathcal{L} \leftarrow \nabla_\theta \mathcal{L} + \mathbf{g}_i \frac{\partial G_\theta}{\partial \theta}$\;

        \tcp{2. Fake Teacher Update}
        $W_i = \exp(-\tilde{a}_i)$ \tcp*{Higher weight for low rewards}
        $\mathcal{L}_{\text{diff}, i} = W_i \cdot \| \epsilon_i - \epsilon_\psi(x_{t,i}, t) \|^2$\;
        
        $\nabla_\psi \mathcal{L}_\psi \leftarrow \nabla_\psi \mathcal{L}_\psi + \nabla_\psi (w_i \cdot \mathcal{L}_{\text{diff}, i})$\;
    }
    
    Update student: $\theta \leftarrow \theta - \eta_\theta \nabla_\theta \mathcal{L}$\;
    
    Update fake teacher: $\psi \leftarrow \psi - \eta_\psi \nabla_\psi \mathcal{L}_\psi$\;
}
\end{algorithm}

\clearpage
\section{Supplementary Experimental Results}

In this section, we present additional analytical experiments, quantitative evaluations, and qualitative demonstrations to further validate the effectiveness of AMD.

\subsection{Analytical Experiments}
\label{sec:appendix_analytical_results} In this section, we provide empirical evidence to support the design motivation of AMD.

\paragraph{Navie Adaption vs AMD.} According to our unified optimization framework in \cref{eq:general_form}, the choice of the adaptation operator $\mathcal{H}$ fundamentally dictates the distillation dynamics. To investigate this, we compare a \textbf{naive adaptive strategy} ($\mathcal{H}_{\text{naive}}$, see \cref{eq:naive_dmd}) with our proposed \textbf{decoupled modulation} ($\mathcal{H}_{\text{AMD}}$, see \cref{eq:decoupled_grad}) on a 2D multi-modal toy dataset. In this setup, the real teacher's energy potential represents a symmetric multi-modal distribution, while the reward model favors a specific subset of modes (located in the lower-left region).

As illustrated in Fig.~\ref{fig:navie_vs_amd}, although both methods are guided by the same reward proxy, their training stability and convergence behavior differ significantly:
\begin{itemize}
    \item \textbf{Naive Adaptation:} Simple linear scaling of $\mathbf{d}_{\text{real}}$ and $\mathbf{d}_{\text{fake}}$ fails to resolve the structural conflict between the teacher's global guidance and the reward model's local prioritization. In the mid-to-late stages of training (Iteration 1000--2000), this leads to a \textbf{catastrophic distribution collapse}, where the student distribution loses its multi-modal structure and dissolves into an uninformative mass.
    
    \item \textbf{AMD (Ours):} By contrast, AMD successfully navigates the student toward the high-reward regime while maintaining the structural integrity of the generation manifold. By decoupling the signals into distribution matching (DM) and conditional alignment (CA) terms, AMD enables stable, high-fidelity steering that avoids the "gradient deadlock" encountered by the naive approach.
\end{itemize}

These results empirically validate the necessity of the decoupled adaptation mechanism in AMD for achieving stable, reward-aware distribution matching.

\begin{figure}[h]
    \centering
    \includegraphics[width=0.9\linewidth]{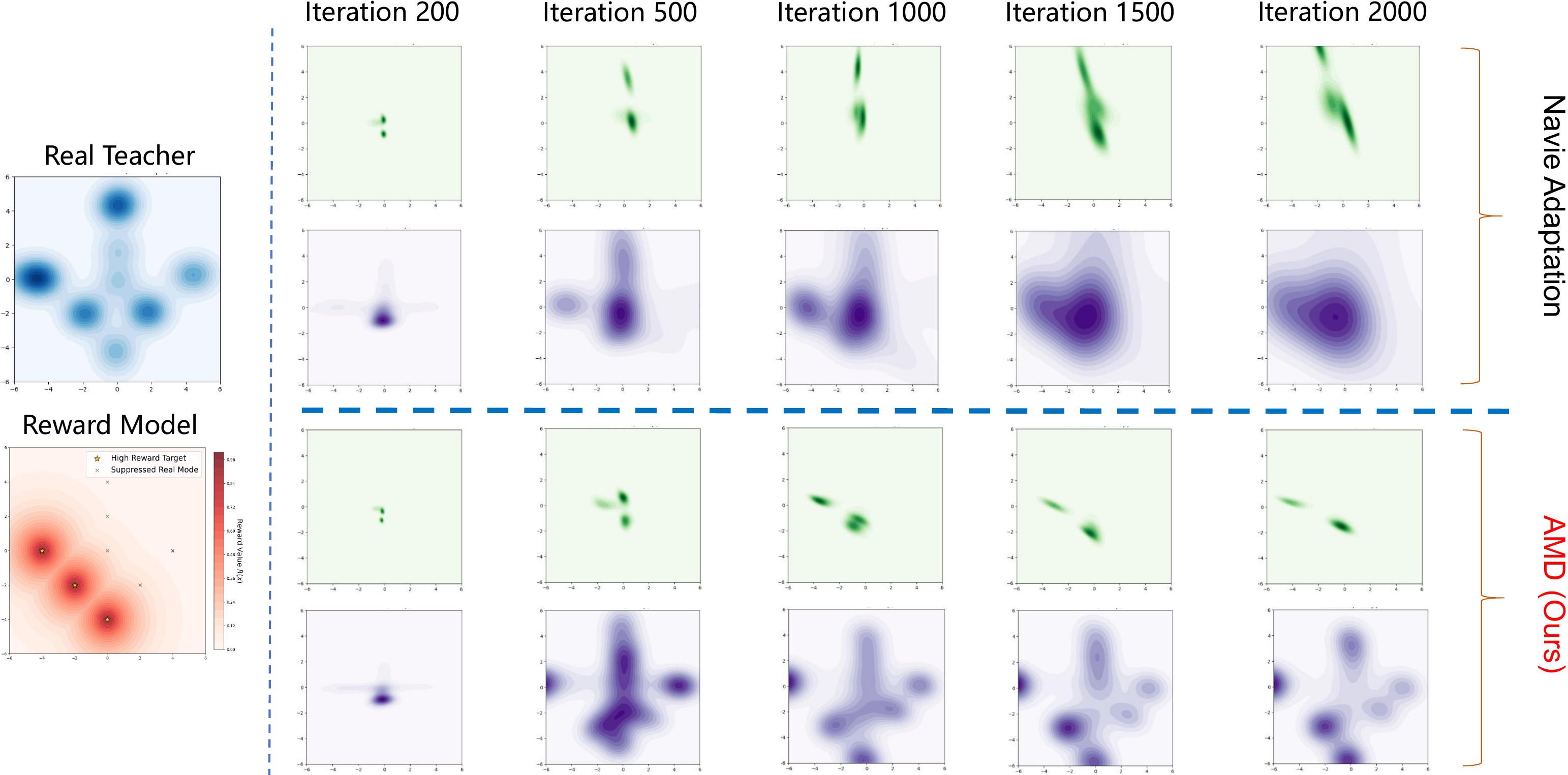}
    \caption{We track the evolution of the student distribution on a 2D toy dataset. \textbf{Top:} Naive Adaptation ($\mathcal{H}_{\text{naive}}$) suffers from mode merging and eventual distribution collapse as it fails to balance the competing distillation forces. \textbf{Bottom:} AMD ($\mathcal{H}_{\text{AMD}}$) successfully maneuvers the student toward high-reward modes (lower-left) while preserving sharp, distinct distributional fidelity. This highlights the criticality of decoupled signal modulation in preventing training instability within the \emph{Forbidden Zone}.}
    \label{fig:navie_vs_amd}
\end{figure}

\subsection{Quantitative Results} 
\label{sec:appendix_quantitative_results}

\paragraph{Results of AMD in other Image Generation Benchmark} Here, we first provide the breakdown of the experiments on GenEval, as shown in Tab.~\ref{tab:t2i_geneval_all}, our proposed AMD almost outperforms the previous methods across all dimensions. Besides, we conduct the experiments on DrawBench, as shown in Tab.~\ref{tab:t2i_draw}, the experiment results further validate the effectiveness of our method. Last but not least, we provide the winning rate of AMD across different datasets, compared with the standard DMD2.

\begin{figure*}[h]
\centering
\includegraphics[width=1.\textwidth,trim={0cm 0cm 0cm 0cm},clip]{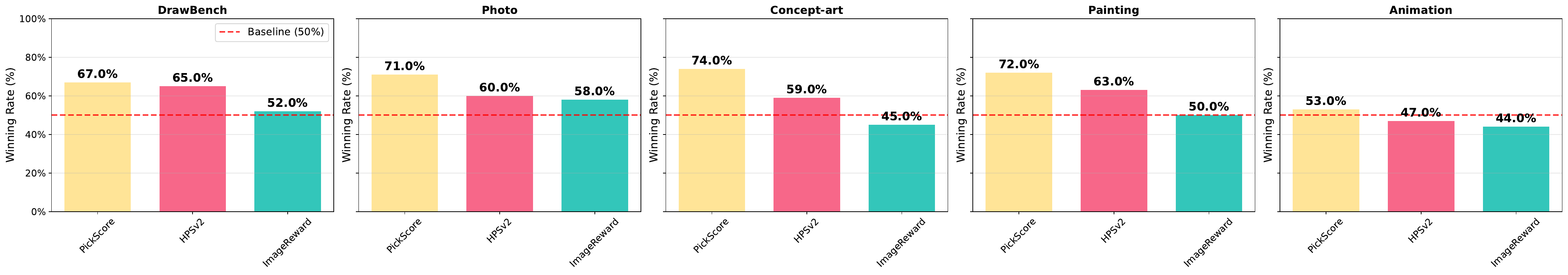}
\caption{The winning rate of AMD over DMD2 on SDXL across DrawBench and HPDv2. The standard DMD2~(baseline) winning rate defaults to 50\%. The results reveal the superiority of AMD in synthesizing images with good quality, comparing with DMD2.}
\label{figure:winning-rate-amd}
\end{figure*}

\begin{table*}[h]
    \centering
    \caption{Comparison of different text-to-image models on GenEval~\cite{ghosh2023genevalobjectfocusedframeworkevaluating} benchmark. The best results in each category are highlighted in \textbf{bold}.}
    \label{tab:t2i_geneval_all}
    \setlength{\tabcolsep}{5pt} % 稍微减小列间距以适应更多列
    \renewcommand{\arraystretch}{1.2} 
    
    \resizebox{.9\linewidth}{!}{\begin{tabular}{l|c|c|c|c|cccccc}
\toprule
\textbf{Method} & \textbf{\#Params} & \textbf{Resolution} & \textbf{NFEs} & \textbf{Overall~$\uparrow$} & \textbf{Single~$\uparrow$} & \textbf{Two~$\uparrow$} & \textbf{Count~$\uparrow$} & \textbf{Colors~$\uparrow$} & \textbf{Pos~$\uparrow$} & \textbf{Attr.~$\uparrow$} \\
\midrule
\multicolumn{11}{c}{\textit{\textbf{Pretrained Models}}} \\
\midrule
SDXL~\cite{podell2023sdxlimprovinglatentdiffusion} & 2.6B & $1024 \times 1024$ & $50 \times 2$ & 0.55 & 0.98 & 0.74 & 0.39 & 0.85 & 0.15 & 0.23 \\
FLUX.1-dev~\cite{flux2024} & 12.0B & $1024 \times 1024$ & 50 & 0.66 & 0.98 & 0.81 & 0.74 & 0.79 & 0.22 & 0.45 \\
\midrule
\multicolumn{11}{c}{\textit{\textbf{Distilled Models}}} \\
\midrule
SDXL-LCM~\cite{luo2023latentconsistencymodelssynthesizing} & 2.6B & $1024 \times 1024$ & 4 & 0.50 & 0.99 & 0.55 & 0.38 & \textbf{0.85} & 0.07 & 0.14 \\
SDXL-Turbo~\cite{podell2023sdxlimprovinglatentdiffusion} & 2.6B & $512 \times 512$ & 4 & 0.56 & \textbf{1.00} & 0.72 & \textbf{0.49} & 0.82 & \textbf{0.11} & 0.21 \\
SDXL-Lightning~\cite{lin2024sdxllightningprogressiveadversarialdiffusion} & 2.6B & $1024 \times 1024$ & 4 & 0.53 & 0.98 & 0.61 & 0.44 & 0.84 & \textbf{0.11} & 0.21 \\
SDXL-DMD2~\cite{yin2024improved} & 2.6B & $1024 \times 1024$ & 4 & 0.51 & 0.98 & 0.62 & 0.43 & 0.82 & 0.07 & 0.15 \\
SDXL-DMDR~\cite{jiang2025distribution} & 2.6B & $1024 \times 1024$ & 4 & 0.56 & 0.99 & 0.76 & 0.42 & 0.84 & \textbf{0.11} & \textbf{0.24} \\
\rowcolor{gray!15} SDXL-AMD (Ours) & 2.6B & $1024 \times 1024$ & 4 & \textbf{0.57} & \textbf{1.00} & \textbf{0.76} & \underline{0.47} & \textbf{0.85} & \underline{0.10} & \underline{0.23} \\
\bottomrule
\end{tabular}
}
\end{table*}

\begin{table}[h]
    \centering
    \begin{minipage}{0.6\linewidth}
        \centering
    \caption{Quantitative comparison of text-to-image task on DrawBench~\cite{rombach2022highresolutionimagesynthesislatent} dataset. (base model: SDXL). We compare AMD against DMD2 using a lot of preference-based metrics. The results validate the effectiveness of AMD.}
    \label{tab:t2i_draw}
    \setlength{\tabcolsep}{5pt} % 稍微减小列间距以适应更多列
    \renewcommand{\arraystretch}{1.2} 
    
    \resizebox{.7\linewidth}{!}{\begin{tabular}{cccc}
\hline
Method     & PickScore      & HPSv2          & ImageReward    \\ \hline
DMD2       & 22.19          & 29.66          & 79.46          \\
AMD (Ours) & \textbf{22.49} & \textbf{30.40} & \textbf{84.01} \\ \hline
\end{tabular}
}
\end{minipage}
\end{table}

\paragraph{Results of AMD in other Video Generation Benchmark} We further evaluate the scalability of AMD by applying it to the larger Wan2.1-14B backbone. As shown in Tab.~\ref{tab:wan14b_exp}, AMD achieves a higher overall performance than DMD2 on our internal benchmark, improving the Total Score from 118.61 to \textbf{122.15}.

Our internal benchmark consists of 419 diverse image prompts collected from the web, covering a wide range of visual styles (e.g., anime, human-centric scenes, and realistic physical environments), as well as both landscape and portrait layouts. For each image, captions are generated using Gemini 2.5 Pro to ensure rich and consistent textual descriptions. We plan to release this benchmark publicly in future work.

Notably, we observe a trade-off between Visual Quality (VQ) and Text Alignment (TA), which is consistent with our findings on the Wan2.1-1.3B model. This behavior can be attributed to the VideoAlign reward, which explicitly encourages dynamic motion generation, leading to a substantial improvement in Motion Quality (MQ, +16.24), sometimes at the expense of static visual fidelity and strict text-image alignment. Furthermore, as reported in Tab.~\ref{tab:i2v_performance}, evaluation under the VBench++~\citep{huang2025vbench++} framework shows that AMD consistently outperforms DMD2. 

\begin{table}[h]
    \centering
    \begin{minipage}{0.65\linewidth} 
        \centering
        \footnotesize 
        \caption{Quantitative comparison of Image-to-Video generation task. (base model: Wan2.1-14B). Comparison between AMD and DMD2 across VBench and Internal metrics.}
        \label{tab:wan14b_exp}
        
        \setlength{\tabcolsep}{3pt} % 进一步收紧列间距
        \renewcommand{\arraystretch}{1.1} 
        
        % 【关键调整3】resizebox 填满 minipage 的宽度
        \resizebox{.9\linewidth}{!}{
            \begin{tabular}{lcccccccc}
            \toprule
            \multicolumn{1}{c}{\multirow{2}{*}{\textbf{Method}}} & \multicolumn{4}{c}{\textbf{VBench-I2V}} & \multicolumn{4}{c}{\textbf{Internal Bench}} \\
            \cmidrule(lr){2-5} \cmidrule(lr){6-9}
             & VQ $\uparrow$ & MQ $\uparrow$ & TA $\uparrow$ & Total $\uparrow$ & VQ $\uparrow$ & MQ $\uparrow$ & TA $\uparrow$ & Total $\uparrow$ \\
            \midrule
            DMD2 & 26.48 & 32.62 & \textbf{67.26} & 126.36 & \textbf{12.90} & 5.27 & \textbf{100.47} & 118.61  \\
            \rowcolor{gray!15} AMD~(Ours) & \textbf{30.56} & \textbf{39.49} & 60.67 & \textbf{130.72} & 11.26 & \textbf{21.51} & 89.30 & \textbf{122.15} \\
            \bottomrule
            \end{tabular}
        }
    \end{minipage}
\end{table}

\definecolor{hlcolor}{HTML}{B2E0E6}
\newcommand{\cc}{\cellcolor{hlcolor}}

\begin{table*}[h]
    \centering
    \caption{Quantitative comparison of Image-to-Video generation task. (base model: Wan2.1-14B). Best results are highlighted.}
    \label{tab:i2v_performance}
    \small 
    \resizebox{.9\linewidth}{!}{
    \begin{tabular}{lccccc} % 这里定义了 6 列
        \toprule
        \textbf{Method} & 
        \textbf{\shortstack{I2V \\ Subject}} ($\uparrow$) & 
        \textbf{\shortstack{Subject \\ Consistency}} ($\uparrow$) & 
        \textbf{\shortstack{Motion \\ Smoothness}} ($\uparrow$) & 
        \textbf{\shortstack{Aesthetic \\ Quality}} ($\uparrow$) & 
        \textbf{\shortstack{Imaging \\ Quality}} ($\uparrow$) \\
        \midrule
        
        % --- 100-NFE Section ---
        % 将 7 改为 6
        \multicolumn{6}{l}{\textbf{\textit{100-NFE}}} \\ 
        CogVideoX-I2V-SAT~\citep{yang2024cogvideox} & 97.67 & 95.47 & 98.35 & 59.76 & 67.64 \\
        I2Vgen-XL~\citep{zhang2023i2vgen}        & 97.52 & 96.36 & 98.31 & 65.33 & 69.85 \\
        SEINE-512x320~\citep{chen2023seine}     & 96.57 & 94.20 & 96.68 & 58.42 & 70.97 \\
        VideoCrafter-I2V~\citep{chen2024videocrafter2} & 91.17 & 97.86 & 98.00  & 60.78 & 71.68 \\
        SVD-XT-1.0~\citep{blattmann2023stable}        & 97.52 & 95.52 & 98.09  & 60.15 & 69.80 \\
        Step-Video-TI2V ~\citep{ma2025step}  & 95.50 & 96.02 & 99.24  & 70.44 & 78.82 \\
        
        \midrule
        
        % --- 4-NFE Section ---
        % 将 8 改为 6
        \multicolumn{6}{l}{\textbf{\textit{4-NFE}}} \\ 
        DMD2~\citep{yin2024improved}     & 0.9612 & 0.9671 & 0.9908  & 0.6668 & 0.7092 \\ 
        \rowcolor{gray!15} AMD~(Ours)    & \textbf{0.9663} & \textbf{0.9843} & \textbf{0.9935} & \textbf{0.6705} & \textbf{0.7098}  \\
        \bottomrule
    \end{tabular}
    }
\end{table*}

\subsection{Visualization Results} 
\label{sec:appendix_qualtative_results}
We provide an extensive gallery of visual samples generated by AMD across both image and video generation tasks, highlighting its superiority in fidelity and semantic alignment compared to standard distillation methods

\paragraph{Image Generation}
We present a broader range of image generation results across different datasets, including DrawBench and HPDv2, using AMD distilled from SDXL. As shown in Fig.~\ref{figure:draw}, ~\ref{figure:concept}, ~\ref{figure:painting}, ~\ref{figure:photo}, and \ref{figure:anime}, the visual results demonstrate the effectiveness of our proposed AMD, compared with standard DMD2, where synthesized image quality and texture have greatly improved.

\begin{figure*}[h]
    \centering
    \includegraphics[width=.9\textwidth,trim={0cm 0cm 0cm 0cm},clip]{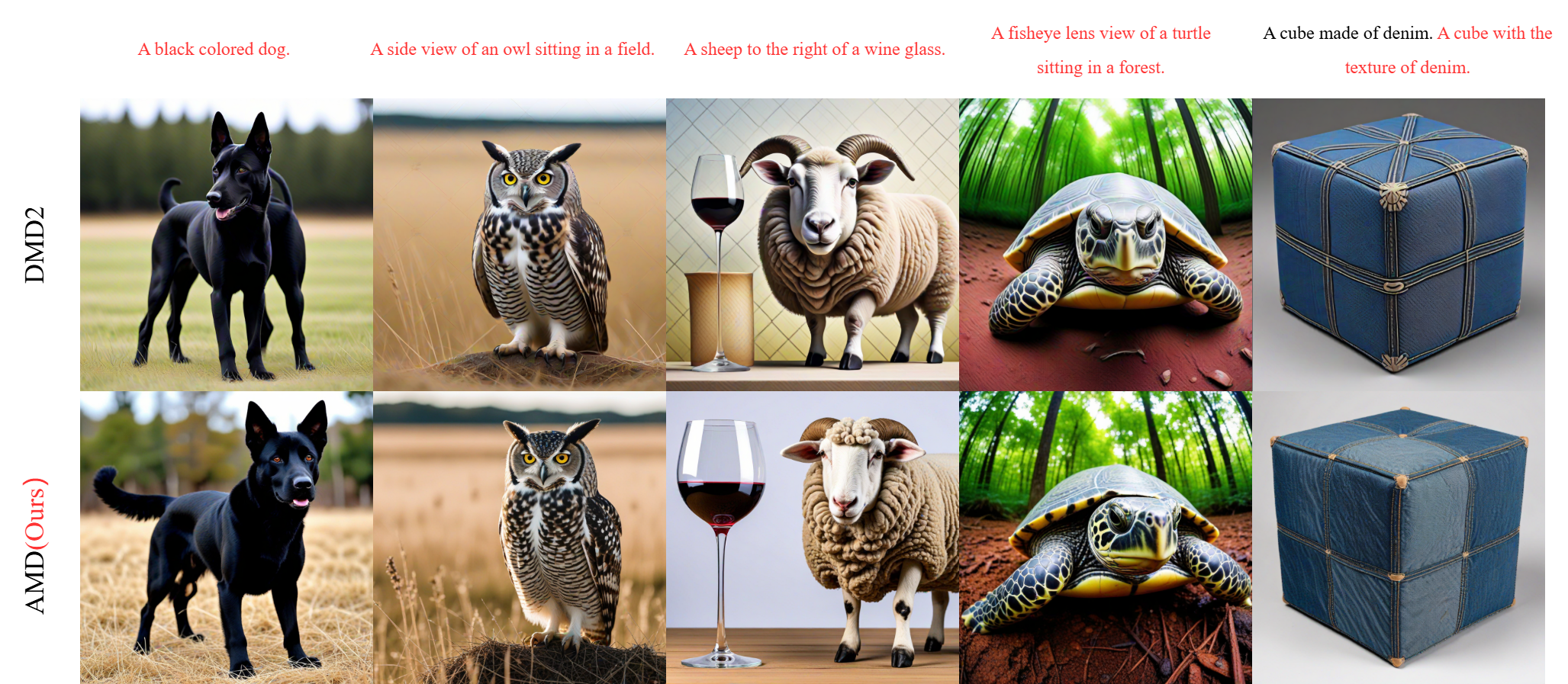}
    \caption{Synthesized images of ADM distilled from SDXL on DrawBench.}
    \label{figure:draw}
\end{figure*}

\begin{figure*}[h]
    \centering
    \includegraphics[width=.9\textwidth,trim={0cm 0cm 0cm 0cm},clip]{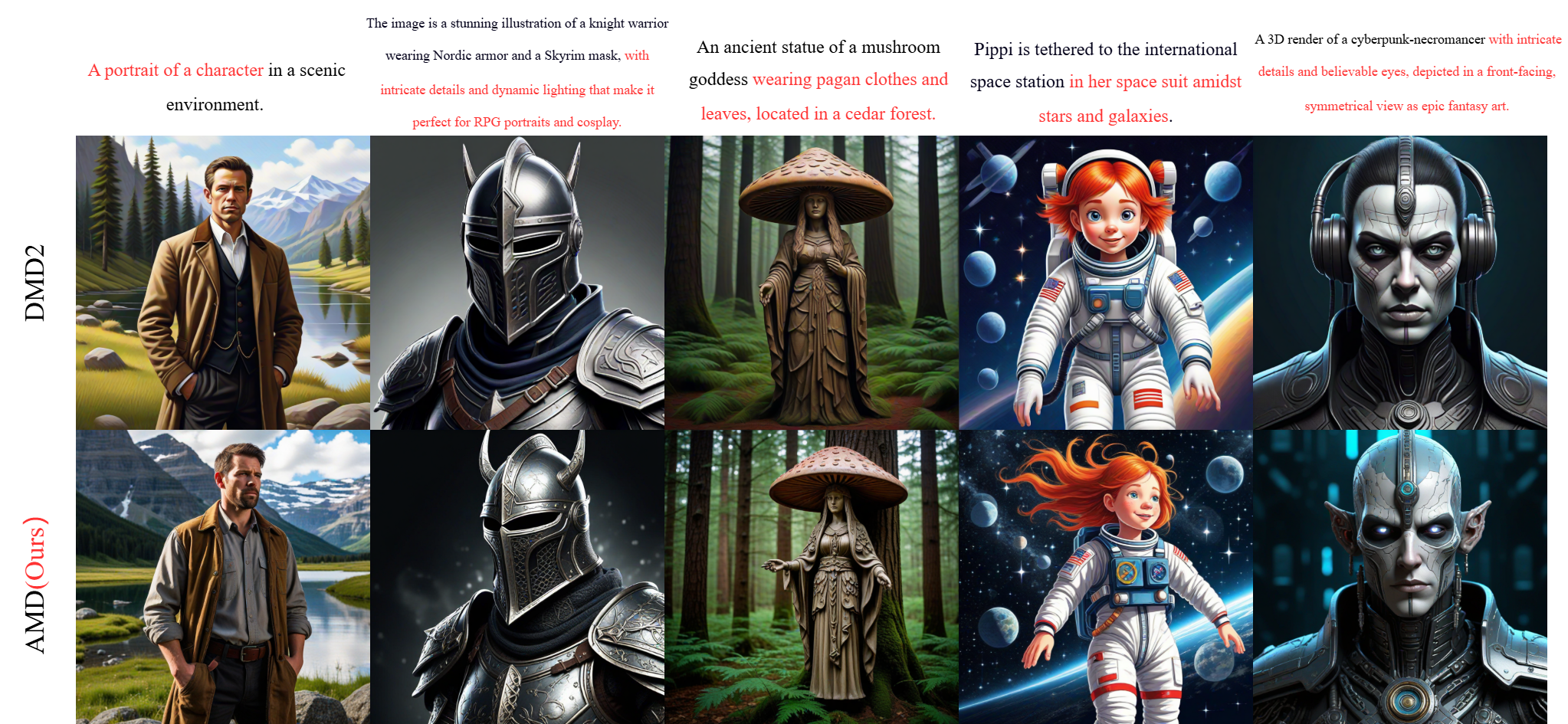}
    \caption{Synthesized images of ADM distilled from SDXL on concept-art subset of HPD v2.}
    \label{figure:concept}
\end{figure*}

\begin{figure*}[h]
    \centering
    \includegraphics[width=.9\textwidth,trim={0cm 0cm 0cm 0cm},clip]{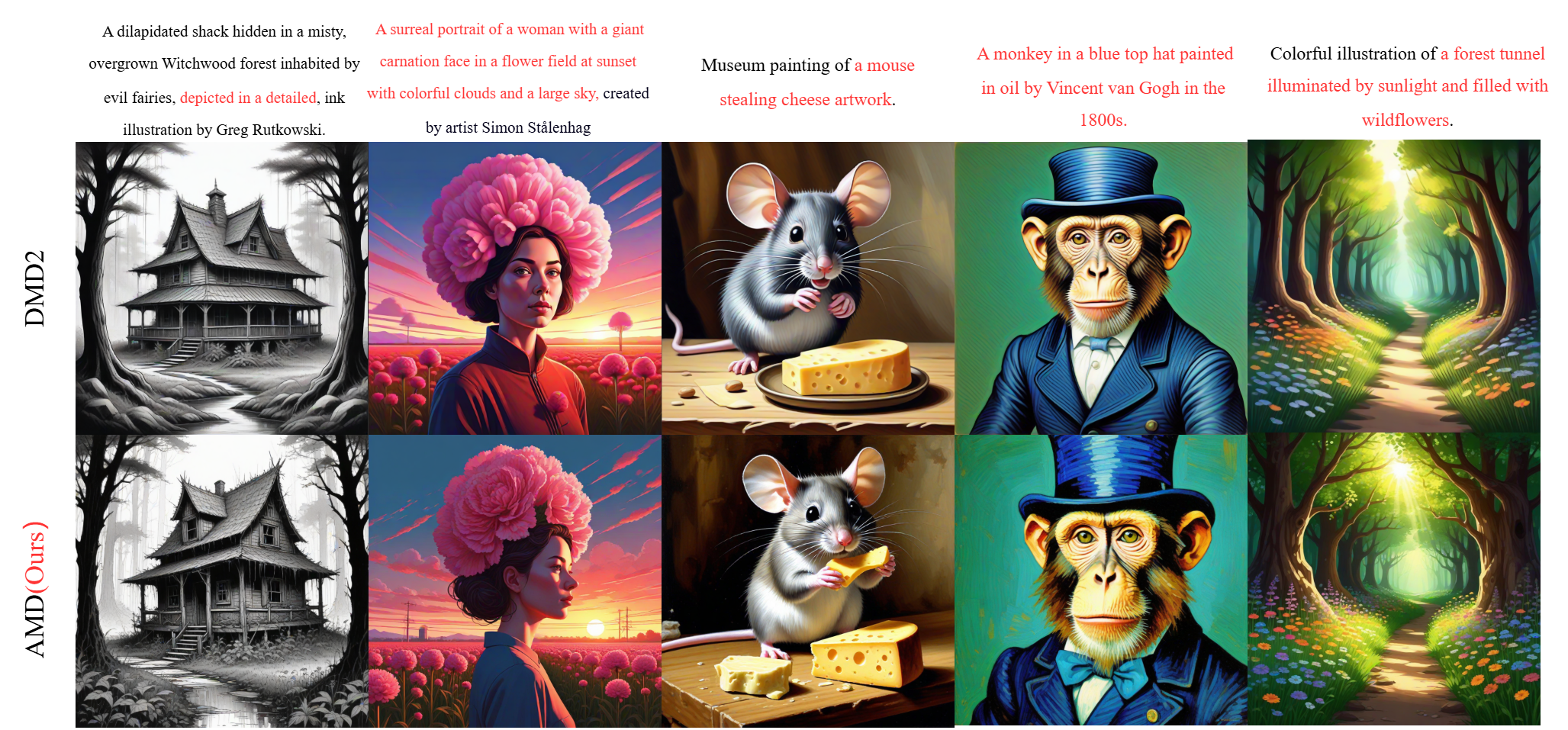}
    \caption{Synthesized images of ADM distilled from SDXL on painting subset of HPD v2.}
    \label{figure:painting}
\end{figure*}

\begin{figure*}[h]
    \centering
    \includegraphics[width=.9\textwidth,trim={0cm 0cm 0cm 0cm},clip]{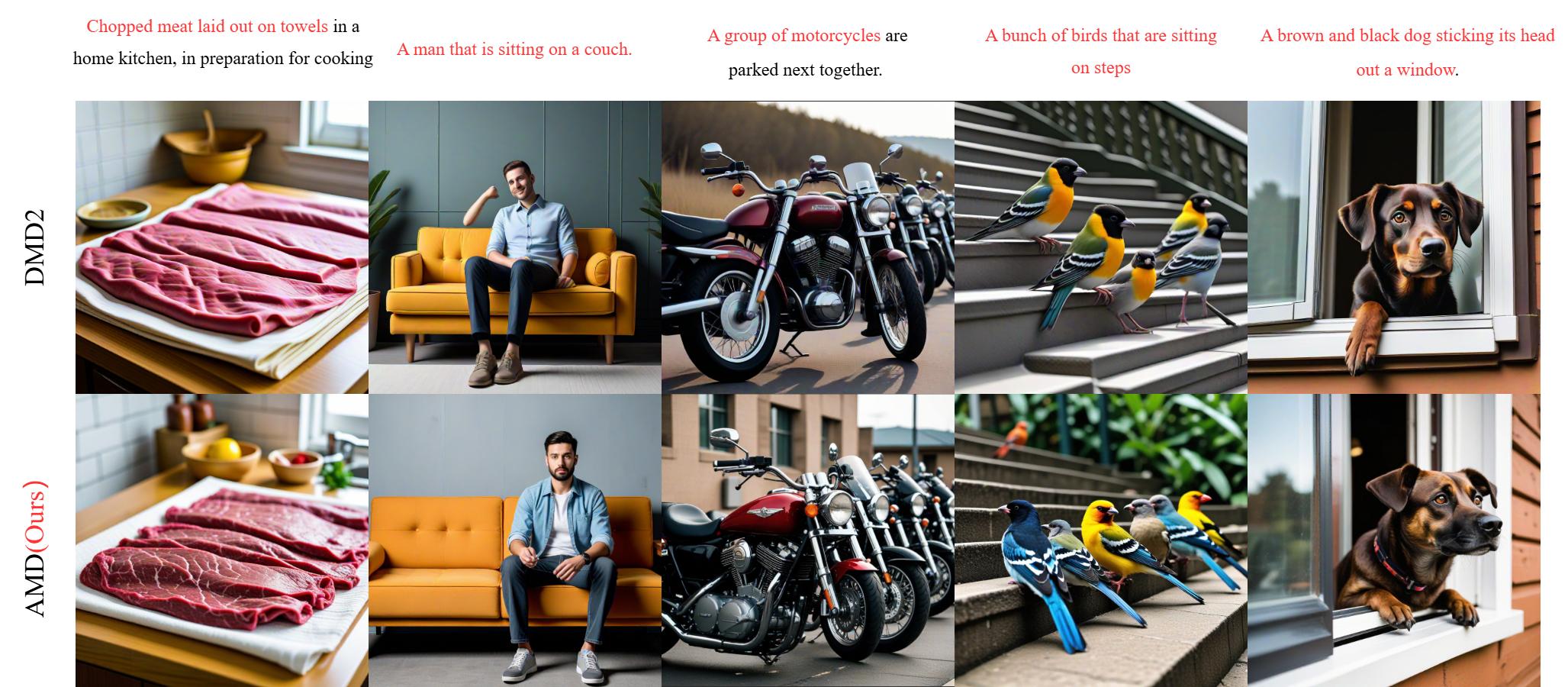}
    \caption{Synthesized images of ADM distilled from SDXL on photo subset of HPD v2.}
    \label{figure:photo}
\end{figure*}

\begin{figure*}[h]
    \centering
    \includegraphics[width=.9\textwidth,trim={0cm 0cm 0cm 0cm},clip]{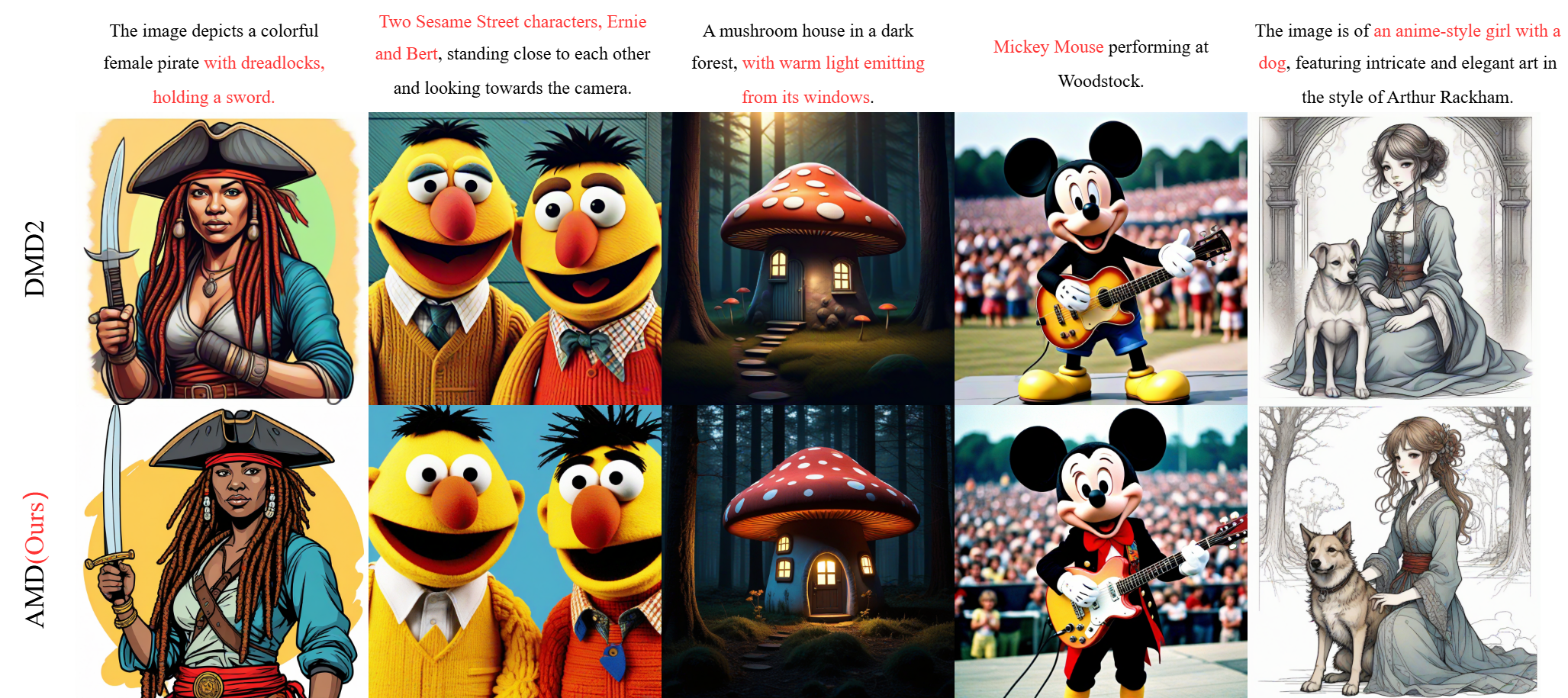}
    \caption{Synthesized images of ADM distilled from SDXL on anime subset of HPD v2.}
    \label{figure:anime}
\end{figure*}

\clearpage

\paragraph{Video Generation}
We present a broader range of video generation results using the Wan2.1-1.3B backbone. 
Fig.~\ref{fig:supplement_video_case} visually compares videos distilled via standard DMD versus our AMD.
It is evident that the baseline DMD often suffers from temporal flickering or motion degradation (e.g., static backgrounds or unnatural warping) in challenging scenarios. In contrast, AMD preserves dynamic fidelity and generates smoother, more logically consistent motion, demonstrating the efficacy of our decoupled modulation strategy in the spatiotemporal domain.

\begin{figure}[h]
    \centering
    \includegraphics[width=0.86\linewidth]{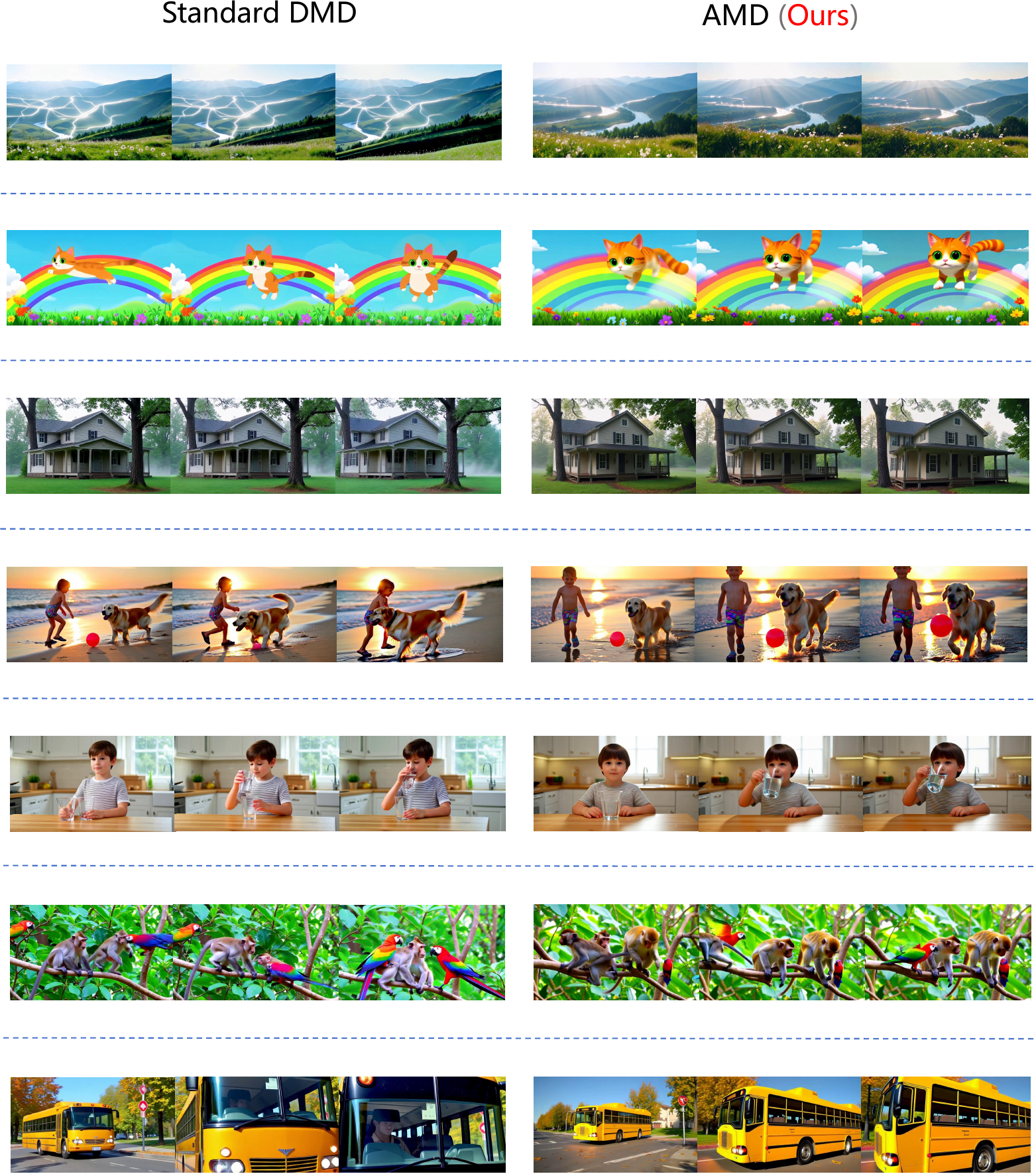}
    \caption{Qualitative comparison on text-to-video generation. We show that AMD significantly outperforms the baseline, exhibiting superior motion smoothness, higher visual fidelity, and better prompt alignment.}
    \label{fig:supplement_video_case}
\end{figure}

\end{document}